\newtheorem{proof}{Proof}
\newtheorem{Lemma}{Lemma}
\newtheorem{theorem}{Theorem}
\newtheorem{definition}{Definition}
\def\Sp{{\scriptsize{\textcircled{{\emph{\tiny{\textbf{Sp}}}}}}}}
\DeclareRobustCommand\onedot{\futurelet\@let@token\@onedot}
\def\@onedot{\ifx\@let@token.\else.\null\fi\xspace}
\def\eg{\emph{e.g}\onedot} 
\def\ie{\emph{i.e}\onedot}
\def\wrt{w.r.t\onedot} 
\def\etal{\emph{et al}\onedot}
\begin{document}
% Do not put math or special symbols in the title.
\title{Multiple Graph Learning for Scalable Multi-view Clustering}

\author{Tianyu~Jiang,
        Quanxue~Gao,% <-this % stops a space
        and~Xinbo~Gao,~\IEEEmembership{Senior,~IEEE}
\thanks{T. Jiang and Q. Gao are with the State Key Lab. of Integrated Services Networks, Xidian University, Xi'an 710071, China (e-mail: qxgao@xidian.edu.cn).} % 
\thanks{X. Gao is with the School of Electronic Engineering, Xidian University, Xi’an 710071, China and with the Chongqing Key Laboratory of Image Cognition, Chongqing University of Posts and Telecommunications, Chongqing 400065, China.}
\thanks{Manuscript received XXX XX, XXXX; revised XXX XX, XXXX.}}

% The paper headers
\markboth{Journal of \LaTeX\ Class Files,~Vol.~XX, No.~XX, June~2021}%
{Shell \MakeLowercase{\textit{Jiang et al.}}: Bare Demo of IEEEtran.cls for IEEE Journals}

\maketitle

\begin{abstract}
Graph-based multi-view clustering has become an active topic due to the efficiency in characterizing both the complex structure and relationship between multimedia data. However, existing methods have the following shortcomings: (1) They are inefficient or even fail for graph learning in large scale due to the graph construction and eigen-decomposition. (2) They cannot well exploit both the complementary information and spatial structure embedded in graphs of different views. To well exploit complementary information and tackle the scalability issue plaguing graph-based multi-view clustering, we propose an efficient multiple graph learning model via a small number of anchor points and tensor Schatten $p$-norm minimization. Specifically, we construct a hidden and tractable large graph by anchor graph for each view and well exploit complementary information embedded in anchor graphs of different views by tensor Schatten $p$-norm regularizer. Finally, we develop an efficient algorithm, which scales linearly with the data size, to solve our proposed model. Extensive experimental results on several datasets indicate that our proposed method outperforms some state-of-the-art multi-view clustering algorithms.
\end{abstract}

\begin{IEEEkeywords}
Multi-view learning, clustering, graph learning.
\end{IEEEkeywords}

\IEEEpeerreviewmaketitle

\section{Introduction}
\IEEEPARstart{W}{ith} rapid development of the Internet, we can collect massive unlabeled multimedia data such as images, videos, audio, and documents, these data are usually represented by different features, which are called multi-view data. Multi-view clustering partitions a group of unlabeled multimedia data into different groups according to the relationships between data such that the data in the same group have high similarity to each other, while data in different groups are diverse. Due to the fact that graph is an important data structure which intuitively represents the relationship between data, graph based multi-view clustering has achieved impressive performance and become an active topic in computer vision and artificial intelligence. One of the most representative graph-based clustering methods is co-regularized multi-view spectral clustering (Co-reg)~\cite{2011Co}. It performances spectral clustering on each view to get the corresponding indicator matrix, and then gets the common indicator matrix by minimizing divergence between indicator matrices of different views, finally discrete labels are obtained by performing $K$-means on the common indicator matrix.

The performance of Co-reg degrades remarkably when the quality of the predefined graph is poor. In real applications, it is very difficult to manually construct a reasonable graph for each view. Thus, the key problem of graph-based clustering is how to adaptively learn a good graph. In order to work on solving this problem, many effective methods have been proposed to adaptively learn graphs for multi-view clustering~\cite{SFMC,ref_MLAN,NieLL17,XuHNL17,2017Graph}, one of the most representative methods is MLAN~\cite{ref_MLAN}. It sorts manifold learning to adaptively learn the common shared graph for all views and presents multi-view learning with adaptive neighbors (MLAN).

Although the learned graph by MLAN well explores the local intrinsical structure of data, the main important problem plaguing MLAN and other graph-based methods is that, they are inefficient or even fail in large scale due to the computational intensiveness of full-size graph construction and eigen-decomposition of Laplacian matrix. The computational complexity of MLAN is $\bm{\mathcal{O}}(N^2K)$ for graph construction and $\bm{\mathcal{O}}(N^3)$ for eigen-decomposition of Laplacian matrix, respectively, where $N$ and $K$ denote the number of data points and clusters, respectively. The computational issue becomes critical when $N$ is large in the era of big data. Furthermore, the graph learned by MLAN cannot well exploit the relationship among data. It assumes that the same sample in different views has the same neighbors, which cannot be satisfied due to the fact that each view contains some content of the objects that cannot be included in other views. Thus, the neighbors of data from different views should be different. Unfortunately, MLAN neglect this fact. Finally, it can not well exploit the complementary information embedded in similarity graphs of different views. MLAN assumes that similarity graphs of different views, which characterize the similarity between data of the corresponding view, have the same value, it indicates that all the views have the same relationships between data. This requirement is unrealistic and very strict in practical applications due to the fact that each view usually has different effect for clustering.

Drawing inspiration from anchor graph~\cite{ZhuL05,LiuHC10,SFMC} and the recently proposed tensor nuclear norm~\cite{XiaZGSHG21,GZXTPAMI2020}, which helps exploit the complementary information of different views, we present a multiple graph learning for scale multi-view clustering. To be specifical, for each view, our model employs $N\times M$ anchor graph, which characterizes the relationship between $N$ samples and $M$ anchors, to construct a hiddenly full size $N\times N$ graph, where $M\ll N$. To well exploit the complementary information embedded in anchor graphs of different views, we minimize the divergence between anchor graphs by tensor Schatten $p$-norm regularizer. Finally, an efficient algorithm is presented to solve our model.The main contributions are summarized as follows:

\begin{itemize}
  \item We adaptively learn similarity graph for each view and minimize the divergence between graphs of different views by tensor Schatten $p$-norm minimization which well exploits the complementary information embedded in graphs of different views.
  \item The common shared graph, which is learned by our model, has the exact $K$-connected components which characterizes the cluster structure of data. Thus, our model can directly get clustering labels for data without post-processing.
  \item Our method employs $N\times M$ ($M\ll N$) anchor graph to hiddenly construct full size, \ie, $N\times N$ graph. Thus, the computational complexity reduces from $\bm{\mathcal{O}}(N^2K)$ to $\bm{\mathcal{O}}(NMd)$ for each view, where $d$ is the number of feature dimensional of data matrix.
  \item Our proposed algorithm scales linearly with the data size. Thus, it is efficient for graph learning in large scale data. Extensive experiments indicate the effectiveness of our proposed method in several databases.
\end{itemize}

\section{Related work}
Being the efficiency of characterizing the relationship between data, graph based clustering has becoming one of the most representative clustering techniques. One of the most representative methods is spectral clustering, which employs spectral graph theory and formulates the clustering task as an eigen-decomposition problem. The core of spectral clustering is to find an optimal partition of a graph into several subgraphs according to a criterion with some criterion, such as graph cut, normalized graph-cut. For multi-view clustering, one of the most representative spectral clustering methods is co-regularized (Co-reg) multi-view spectral clustering~\cite{KumarRD11r}. It gets the low-dimensional embedding for each view by eigen-decomposition and learns the common shared formulates by minimizing the divergence between low-dimensional embedding of different views.

Although Co-reg has good performance, the treat all views equally, which does not make sense in real-word applications. To improve robustness of clustering algorithm, Nie \etal~\cite{NieLL16} adaptively assigned the weighted values for different views and developed auto-weighted graph learning method (AMGL).

However, the graphs in their methods are fixed. When the input graphs are of poor quality, their clustering performance degrades remarkably. To solve this problem, the multi-view learning with adaptive neighbours (MLAN)~\cite{ref_MLAN} and the self-weighted multi-view clustering (SwMC)~\cite{NieLL17} are proposed. To capture the shared information among the graphs of multiple views, Xia \etal proposed the robust multi-view spectral clustering (RMSC)~\cite{XiaPDY14}. To well characterizes high-order information embedded in view-similar graph, Wu \etal~\cite{WuLZ19} proposed essential tensor learning for multi-view spectral clustering (ETLMSC) method. Although they have achieved good results, they are all expensive time-consuming duo to the computation of the $N\times N$ graph ($N$ is the data size) and fail to deal with the large-scale datasets.

To reduce the computation complexity, some fast algorithms have been developed. For example, Li \etal employed local manifold fusion to integrate heterogeneous features and proposed a large-scale multi-view spectral clustering method (MVSC)~\cite{LiNHH15}. Hu \etal integrated nonnegative embedding and spectral embedding into a unified framework, and proposed multi-view spectral clustering method (SMSC)~\cite{HuNWL20}. Xu \etal proposed the re-weighted multiple k-means (RDEKM)~\cite{XuHNL17} for large-scale data clustering. Li \etal proposed the bipartite graph fusion based multi-view clustering method (SFMC)~\cite{SFMC}. Although the aforementioned methods can explore large-scale multi-view data, all of them neglect the spatial structure and the complementary information embedded in multi-view data, resulting in inferior clustering results. Comparatively, our proposed method learns graph by minimizing tensor Schatten $p$-norm on the view-similarities of all views. Extensive experiments indicate the efficiency of our method.

\textbf{\emph{Notations:}} Throughout the paper, we use
bold calligraphy letters for third-order tensors, \eg, ${\bm{\mathcal {Z}}} \in{\mathbb{R}} {^{{n_1} \times {n_2} \times {n_3}}}$, bold upper case letters for matrices, \eg, ${\bf{Z}}$, bold lower case letters for vectors, \eg, ${\bf{z}}$, and lower case letters such as ${Z_{ijk}}$ for the entries of ${\bm{\mathcal {Z}}}$. Moreover, the $i$-th frontal slice of ${\bm{\mathcal {Z}}}$ is ${\bm{\mathcal {Z}}}^{(i)}$. $\overline {{\bm{\mathcal {Z}}}}$ is the discrete Fast Fourier Transform (FFT) of ${\bm{\mathcal {Z}}}$ along the third dimension, \ie, $\overline {{\bm{\mathcal {Z}}}} = fft({{\bm{\mathcal Z}}},[ ],3)$. Thus, $\bm{{\mathcal Z}} = ifft({\overline {\bm{\mathcal Z}}},[ ],3)$. $tr(\mathbf{Z})$ denotes the trace of matrix $\mathbf{Z}$. $\mathbf{I}$ is an identity matrix. ${\bf{Z}}^\textrm{T}$ is defined as the transpose of matrix  ${\bf{Z}} $.

\section{Methodology}
Given a multi-view dataset $\{{\mathbf{X}}^{(1)},{\bf{X}}^{(2)}, \cdots,{\bf{X}}^{(V)}\}$ with $K$ clusters, where $ {\bf{X}}^{(v)} \in{\mathbb{R}}^{N \times d_v}$ is the data matrix of $v$-th view, $d_v$ is the dimension of data matrix in $v$-th view, $N$ and $V$ are the number of data points and views, respectively. The objective of MLAN is
\begin{equation}\label{obj-jty-1}
\begin{aligned}
&\mathop {\min }\limits_{\bf{S}} \sum\limits_{v = 1}^V {\sqrt {\sum\limits_{i=1}^N \sum\limits_{j = 1}^N {\left\| {{\bf{x}}_i^{(v)} - {\bf{x}}_j^{(v)}} \right\|_2^2{S_{ij}}} } }  + \alpha \left\| {\bf{S}} \right\|_F^2\\
\textrm{s.t.}&\quad\quad \forall i,{\bf{s}}_i^{\textrm{{T}}}{\bf{1}} = 1, 0 \le {S_{ij}} \le 1, \textrm{rank}({{\bf{L}}_{\bf{S}}}) = N-K
\end{aligned}
\end{equation}
where, ${\bf{S}} \in {\mathbb{R}}^{N \times N}$ is the similarity matrix, and ${{\bf{L}}_{\bf{S}}} = {{\bf{D}}_{\bf{S}}} - \frac{{{\bf{(}}{{\bf{S}}^{\textrm{{T}}}} + {\bf{S)}}}}{2}$ is the Laplacian matrix, ${{\bf{D}}_{\bf{S}}}$ is a diagonal matrix, and ${{\bf{D}}_{\bf{S}}}(i,i)=\sum\nolimits_j {\frac{{({S_{ij}} + {S_{ji}})}}{2}}$, $\alpha$ is a trade-off parameter.

Despite the promising preliminary results for clustering, the model (\ref{obj-jty-1}) generally suffers from high computational complexity, and fails to handle large-scale multi-view data in real-world situations. To be specific, the main time of model (\ref{obj-jty-1}) is spent on both the full-size similarity matrix calculation and eigen-decomposition of Laplacian matrix $\mathbf{L}_\mathbf{S} \in \mathbb{R}^{N \times N}$, the computational complexity required for which are $\mathcal{O}({N^2}K)$ and $\mathcal{O}(N^3)$, respectively. Moreover, the model (\ref{obj-jty-1}) cannot well exploit the complementary information and spatial structure embedded in similarity matrices of different views. Therefore, the learned the similarity matrix in the model (\ref{obj-jty-1}), which is shared by different views, cannot well characterize the cluster structure, resulting in inferior clustering results.

To address the aforementioned issues, we propose a multiple graph learning model for scalable multi-view clustering. To be specific, in order to reduce the computational complexity of the model (\ref{obj-jty-1}), inspired by the remarkable success of the bipartite graph~\cite{ZhuL05,LiuHC10,SFMC}, we leverage an effective bipartite graph $\mathbf{Z}\in {\mathbb{R}}^{N \times M}$ to replace the full-size similarity matrix $\mathbf{S}$, where the newly introduced bipartite graph $\mathbf{Z}$ describes the relationship between $N$ data points and $M$ anchors. What's more, to well characterize the cluster structure and directly obtain the clustering results without post-processing, motivated by the Lemma~\ref{lem1}, we introduce the Laplacian rank constraint to ensure the learned graph $\mathbf{Z}$ has exact $K$-connected components. Thus, we have
\begin{equation}\label{obj-jty-2}
\begin{aligned}
&\mathop {\min }\limits_{\bf{Z}} \sum\limits_{v = 1}^V {\sqrt {\sum\limits_{i = 1}^N {\sum\limits_{j = 1}^M {\left\| {{\bf{x}}_i^{(v)}{\bf{ - a}}_j^{(v)}} \right\|_2^2{Z_{ij}}} } } {\bf{ + }}\alpha \left\| {\bf{Z}} \right\|_F^2} \\
\textrm{s.t.}&\quad \forall i, {\bf{z}}_i^\textrm{T}{\bf{1}} = 1,0 \le {Z_{ij}} \le 1, \textrm{rank}{\bf{(}}{\widetilde {\bf{L}}_{{{\bf{S}}}}}{\bf{) = }}N + M - K
\end{aligned}
\end{equation}
where $ {\widetilde {\bf{L}}_{{{\bf{Z}}}}}{\bf{ = I - D}}_{{}}^{ - {\textstyle{1 \over 2}}}{{\bf{B}}}{\bf{D}}_{{}}^{ - {\textstyle{1 \over 2}}}$ is the normalized Laplacian matrix of matrix ${{\bf{B}}} \in {\mathbb{R}}{^{(N + M) \times (N + M)}}$, and ${{\bf{B}}}{\bf{ = }}\left[ {\begin{array}{*{20}{c}}
{}&{{{\bf{Z}}}}\\{{{{\bf{(}}{{\bf{Z}}}{\bf{)}}}^{\textrm{{T}}}}}&{} \end{array}} \right]$,
${{\bf{D}}_{{}}}$ is a diagonal matrix whose diagonal elements are ${{\bf{D}}_{{}}}(i,i){\bf{ = }}\sum\nolimits_{j = 1}^{N + M} {P_{ij}}$, ${\bf{a}}_j^{(v)}$ is the $j$-th anchor in the $v$-th view.

\begin{Lemma}~\cite{LL1}\label{lem1} The multiplicity $K$ of the eigenvalue zeros of $\widetilde{\mathbf{L}}_{\mathbf{Z}}$ is equals to the number of connected components in the graph associated with ${\mathbf{Z}}$.
\end{Lemma}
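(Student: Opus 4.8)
The plan is to treat this as the classical spectral characterization of the normalized graph Laplacian, specialized to the bipartite affinity matrix $\mathbf{B}$. First I would establish that $\widetilde{\mathbf{L}}_{\mathbf{Z}}$ is positive semi-definite and pin down its kernel, and then argue that the dimension of this kernel coincides exactly with the number of connected components of the graph associated with $\mathbf{Z}$. Since the multiplicity of the zero eigenvalue equals $\dim\ker\widetilde{\mathbf{L}}_{\mathbf{Z}}$, this suffices.

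First I would compute the quadratic form. For any $\mathbf{f}\in\mathbb{R}^{N+M}$, setting $\mathbf{g}=\mathbf{D}^{-1/2}\mathbf{f}$ and using $\mathbf{D}(i,i)=\sum_j B_{ij}$ together with the symmetry of $\mathbf{B}$, a direct manipulation gives
\begin{equation}
\mathbf{f}^{\textrm{T}}\widetilde{\mathbf{L}}_{\mathbf{Z}}\mathbf{f}
= \sum_i\Big(\sum_j B_{ij}\Big)g_i^2 - \sum_{i,j}B_{ij}g_ig_j
= \frac{1}{2}\sum_{i,j}B_{ij}(g_i-g_j)^2 .
\end{equation}
Because every entry $B_{ij}\ge 0$, this form is nonnegative, so $\widetilde{\mathbf{L}}_{\mathbf{Z}}\succeq 0$ and all its eigenvalues lie in $[0,\infty)$.

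Second, I would characterize the kernel. Since the form above is a nonnegative sum, $\mathbf{f}^{\textrm{T}}\widetilde{\mathbf{L}}_{\mathbf{Z}}\mathbf{f}=0$ forces $g_i=g_j$ whenever $B_{ij}>0$, i.e. $\mathbf{g}=\mathbf{D}^{-1/2}\mathbf{f}$ is constant along every edge of the graph. Propagating this equality along paths shows that $\mathbf{g}$ is constant on each connected component, and conversely any such piecewise-constant $\mathbf{g}$ yields $\mathbf{f}=\mathbf{D}^{1/2}\mathbf{g}\in\ker\widetilde{\mathbf{L}}_{\mathbf{Z}}$. If the graph has $K$ connected components $C_1,\dots,C_K$, the vectors $\mathbf{f}^{(k)}=\mathbf{D}^{1/2}\mathbf{1}_{C_k}$ are supported on disjoint index sets, hence linearly independent, and they span the kernel; therefore $\dim\ker\widetilde{\mathbf{L}}_{\mathbf{Z}}=K$, which completes the argument.

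The step I expect to require the most care is the propagation argument inside the kernel characterization: one must verify that edge-wise constancy of $\mathbf{g}$ genuinely forces constancy across an entire connected component (a path-connectivity induction), and, just as importantly, that there are no additional kernel vectors beyond the $K$ component indicators. I would also keep an eye on the normalization, since $\widetilde{\mathbf{L}}_{\mathbf{Z}}$ is the \emph{normalized} Laplacian: the kernel vectors are $\mathbf{D}^{1/2}\mathbf{1}_{C_k}$ rather than the bare indicators $\mathbf{1}_{C_k}$, and one must confirm that $\mathbf{D}$ is invertible (no isolated vertices, which is guaranteed by the constraint $\mathbf{z}_i^{\textrm{T}}\mathbf{1}=1$) so that the change of variables $\mathbf{g}=\mathbf{D}^{-1/2}\mathbf{f}$ is legitimate.
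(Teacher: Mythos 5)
Your proof is correct: it is the classical spectral characterization of the normalized Laplacian (positive semi-definiteness via the quadratic form $\frac{1}{2}\sum_{i,j}B_{ij}(g_i-g_j)^2$, then identification of the kernel with the span of $\mathbf{D}^{1/2}\mathbf{1}_{C_k}$ over the connected components). The paper itself offers no proof of this lemma --- it is imported by citation --- so there is no in-paper argument to compare against; yours is the standard one that the cited source uses. One small point to tighten: you justify invertibility of $\mathbf{D}$ by the constraint $\mathbf{z}_i^{\textrm{T}}\mathbf{1}=1$, but that only bounds the degrees of the $N$ data-point nodes of the bipartite graph away from zero; the degrees of the $M$ anchor nodes are column sums $\sum_i Z_{ij}$, which the constraints do not force to be positive, so you should additionally assume (or note that the construction guarantees) that every anchor is connected to at least one data point.
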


It is difficult to directly solve the Laplacian rank constraint, \ie, $\textrm{rank}{\bf{(}}{\widetilde {\bf{L}}_{{{\bf{S}}}}}{\bf{)}} = N + M - K$. To this end, according to Ky Fan's theorem~\cite{KFF}, the Laplacian rank constraint can be approximated by solving the following model:
\begin{equation}\label{obj-jty-3}
\begin{aligned}
\mathop {\min }\limits_{{{\bf{F}}^{\textrm{T}}}{{\bf{F}}} = {\bf{I}}} {tr\left({\bf{F}}^{\textrm{T}}{{\widetilde {\bf{L}}}_{{{\bf{Z}}}}}{\bf{F}}\right)}
\end{aligned}
\end{equation}
where, ${{\bf{F}}} = [{{\bf{f}}_1}; \cdots ;{\bf{f}}_{N + M}] \in {\mathbb{R}}{^{(N + M) \times K}}$ is the indicator matrix.

Then, the model (\ref{obj-jty-2}) can be rewritten as
\begin{equation}\label{obj-jty-4}
\begin{aligned}
&\mathop {\min }\limits_{\bf{Z}, \mathbf{F}} \sum\limits_{v = 1}^V {\sqrt {\sum\limits_{i = 1}^N {\sum\limits_{j = 1}^M {\| {{\bf{x}}_i^{(v)}{\bf{ - a}}_j^{(v)}} \|_2^2{Z_{ij}}} } } {\bf{ + }}\alpha \| {\bf{Z}} \|_F^2} + \beta {tr({\bf{F}}^{\textrm{T}}{{\widetilde {\bf{L}}}_{{{\bf{Z}}}}}{\bf{F}})} \\
&\quad\quad\quad\textrm{s.t.}\quad \forall i, {\bf{z}}_i^\textrm{T}{\bf{1}} = 1,0 \le {Z_{ij}} \le 1, {{\bf{F}}^{\textrm{{T}}}}{\bf{F}} = {\bf{I}}
\end{aligned}
\end{equation}
where, $\beta$ is a hidden parameter, and it can be adaptively updated as follows. We first initialize $\beta$ with a small value, and update it according to the number of eigenvalue zero of ${\widetilde {\bf{L}}_{{{\bf{Z}}}}}$ after each iteration. If this number is smaller than $K$, $\beta$ is multiplied by 2, or if it is greater than $K+1$, $\beta$ is divided by 2, otherwise we terminate the iterations.

For multi-view data,  each view contains some content of the objects that cannot be included in other views. Thus, the graphs, which characterize of the relationship among data points and anchors in different views, should be different. However, the model (\ref{obj-jty-4}) ignores this. To better exploit the cluster structure, we rewrite the model (\ref{obj-jty-4}) as
\begin{equation}\label{obj-jty-6}
\begin{aligned}
&\mathop {\min }\limits_{\mathbf{{Z}}, \mathbf{F}} \sum\limits_{v = 1}^V \left({\sqrt {\sum\limits_{i = 1}^N {\sum\limits_{j = 1}^M {\| {{\bf{x}}_i^{(v)}{\bf{ - a}}_j^{(v)}} \|_2^2{Z_{ij}^{(v)}}} } } {\bf{ + }}\alpha \| {\mathbf{{Z}}^{(v)}} \|_F^2}\right) + \beta {tr({\bf{F}}^{\textrm{T}}{{\widetilde {\bf{L}}}_{{{\bf{Z}}}}}{\bf{F}})} \\
&\quad\quad\quad\textrm{s.t.}\quad \forall i, ({\mathbf{{z}}_i^{(v)}})^\textrm{T}{\bf{1}} = 1,0 \le {Z_{ij}^{(v)}} \le 1, {{\bf{F}}^{\textrm{{T}}}}{\bf{F}} = {\bf{I}}
\end{aligned}
\end{equation}
where $\mathbf{Z}={\frac{1}{V}}\sum\limits_{v = 1}^V {\mathbf{Z}}^{(v)}$ is the graph shared by different views.

To better exploit both the complementary information and spatial structure embedded in graphs of different views, motivated by recently proposed tensor nuclear norm~\cite{XiaZGSHG21,GZXTPAMI2020}, we utilize the tensor Schatten $p$-norm (See Definition~\ref{definitionTSP}) to minimize differences between graphs.
\begin{definition}\label{definitionTSP}
Given ${\bm{\mathcal {Z}}} \in{\mathbb{R}} {^{{n_1} \times {n_2} \times {n_3}}}$, $h ={\min ({n_1},{n_2})}$, tensor Schatten $p$-norm of tensor ${\bm{\mathcal {Z}}}$ is defined as
\begin{equation}\label{eq3}
\begin{aligned}
{\| \bm{{\mathcal Z}} \|_\Sp}=(\sum\limits_{i = 1}^{n3} {\| {{{{{\overline {\bm{\mathcal {Z}}}}}}^{(i)}}} \|_\Sp^p{)^{\frac{1}{p}}}} = ({\sum\limits_{i = 1}^{n3} {\sum\limits_{j = 1}^{h} {{\sigma _j}{{({{{{\overline {\bm{\mathcal {Z}}}}}}^{(i)}})^p}}} )^{\frac{1}{p}}}}
\end{aligned}
 \end{equation}
where, $0<p\leq 1$, ${\sigma _j}({{{\overline {\bm{\mathcal {Z}}}}}^{(i)}})$ denotes the $j$-th singular value of ${{{\overline {\bm{\mathcal {Z}}}}}^{(i)}}$.
\end{definition}

Finally, our objective function is
\begin{equation}\label{obj-jty-7}
\begin{aligned}
&\mathop {\min }\limits_{\mathbf{{Z}}, \mathbf{F}} \sum\limits_{v = 1}^V \left({\sqrt {\sum\limits_{i = 1}^N {\sum\limits_{j = 1}^M {\| {{\bf{x}}_i^{(v)}{\bf{ - a}}_j^{(v)}} \|_2^2{Z_{ij}^{(v)}}} } } {\bf{ + }}\alpha \| {\mathbf{{Z}}^{(v)}} \|_F^2}\right) + \lambda\| \bm{\mathcal{Z}} \| _{\Sp}^p\\
&\quad\quad\quad\quad+\beta {tr({\bf{F}}^{\textrm{T}}{{\widetilde {\bf{L}}}_{{{\bf{Z}}}}}{\bf{F}})} \\
&\quad\quad\quad\textrm{s.t.}\quad \forall i, ({\mathbf{{z}}_i^{(v)}})^\textrm{T}{\bf{1}} = 1,0 \le {Z_{ij}^{(v)}} \le 1, {{\bf{F}}^{\textrm{{T}}}}{\bf{F}} = {\bf{I}}
\end{aligned}
\end{equation}
where, ${\bm{{\mathcal Z}}}(:,v,:) = {{\bf{Z}}^{(v)}}$.

\subsection{Optimization}
To solve the model (\ref{obj-jty-7}), we adopt Augmented La-grange Multiplier~\cite{2011Linearized}, and introduce an auxiliary variable $\mathcal{J}$. Thus,  the model (\ref{obj-jty-7}) can be rewritten as
\begin{equation}\label{obj-jty-10}
\begin{aligned}
\bm{{\mathcal L}}&({{\bf{Z}}^{(1)}}, \cdots,{{\bf{Z}}^{(V)}}, \bm{{\mathcal J}}, {\bf{F}})\\
&=\sum\limits_{v = 1}^V \left({\sqrt {\sum\limits_{i = 1}^N {\sum\limits_{j = 1}^M {\| {{\bf{x}}_i^{(v)}{\bf{ - a}}_j^{(v)}} \|_2^2{Z_{ij}^{(v)}}} } } {\bf{ + }}\alpha \| {\mathbf{{Z}}^{(v)}} \|_F^2}\right) + \lambda\| \bm{\mathcal{J}} \| _{\Sp}^p\\
&\quad\quad+\beta {tr({\bf{F}}^{\textrm{T}}{{\widetilde {\bf{L}}}_{{{\bf{Z}}}}}{\bf{F}})}+\langle {\mathcal{Y}}{\bf{,}}{\mathcal{Z}}{\bf{ - }}{\mathcal{J}}\rangle {\bf{ + }}\frac{\mu }{2}\left\| {{\mathcal{J}} - {\mathcal{Z}}} \right\|_F^2\\
&\quad\textrm{s.t.}\quad \forall i, ({\mathbf{{z}}_i^{(v)}})^\textrm{T}{\bf{1}} = 1,0 \le {Z_{ij}^{(v)}} \le 1, {{\bf{F}}^{\textrm{{T}}}}{\bf{F}} = {\bf{I}}
\end{aligned}
\end{equation}
where, ${\bm{{\mathcal Y}}}(:,v,:) = {{\bf{Y}}^{(v)}}$ is the Lagrange multiplier, $\mu  > 0$ is the adaptive penalty factor. Consequently, the optimization
process could be separated into the following three steps:

$\bullet$ \textbf{Solving $\textbf{F}$ with fixed $\mathbf{Z}^{(v)}$ and $\mathcal{{J}}$}. In this case, the optimization problem \wrt $\textbf{F}$ in the model (\ref{obj-jty-10}) becomes
\begin{equation}\label{obj-jty-12}
\begin{aligned}
\mathbf{F}^{\ast}&=\mathop {\arg \min }\limits_{{{\bf{F}}^\mathbf{T}}{\bf{F}} = {\bf{I}}} tr(\mathbf{F}^\mathbf{T}\widetilde{\mathbf{L}}_{\mathbf{Z}}\mathbf{F})\\
\end{aligned}
\end{equation}
where $ {\widetilde {\bf{L}}_{{{\bf{Z}}}}}{\bf{ = I - D}}_{{}}^{ - {\textstyle{1 \over 2}}}{{\bf{B}}}{\bf{D}}_{{}}^{ - {\textstyle{1 \over 2}}}$.

If we directly solve the model (\ref{obj-jty-12}), it requires $\bm{\mathcal{O}}((N+M)^2K)$ computational complexity, which is squared with the data size $N$. To this end, we provide a time-economical algorithm. Substituting ${\widetilde {\bf{L}}_{{{\bf{Z}}}}}{\bf{ = I - }}{{\bf{D}}^{ - \frac{1}{2}}}\left[ {\begin{array}{*{20}{c}}
{}&{\bf{Z}}\\
{{{\bf{Z}}^{\textrm{{T}}}}}&{}
\end{array}} \right]{{\bf{D}}^{ - \frac{1}{2}}}$ into the model (\ref{obj-jty-12}), and by simple algebra, we have
\begin{equation}\label{P-3}
\begin{aligned}
tr(\mathbf{F}^\textrm{T}\widetilde{\mathbf{L}}_{(\mathbf{Z})}\mathbf{F})&= tr(\mathbf{F}^\textrm{T}\mathbf{F})-tr(\mathbf{F}^\textrm{T}\mathbf{D}^{-\frac{1}{2}}\mathbf{P}\mathbf{D}^{-\frac{1}{2}}\mathbf{F})\\
&=\textrm{Constant}-tr(\mathbf{F}^\textrm{T}\mathbf{D}^{-\frac{1}{2}}\mathbf{P}\mathbf{D}^{-\frac{1}{2}}\mathbf{F})\\
\end{aligned}
\end{equation}

Denote by ${\bf{F = }}\left[ {\begin{array}{*{20}{c}}
{\bf{P}}\\
{\bf{Q}}
\end{array}} \right] \in{\mathbb{R}} {^{(N + M) \times K}},{\rm{ }}{\bf{D = }}\left[ {\begin{array}{*{20}{c}}
{{{\bf{D}}_P}}&{}\\
{}&{{{\bf{D}}_Q}}
\end{array}} \right]$, where ${\bf{P}}\in \mathbb{R}^{N\times K}$ is the first $N$ rows of ${\bf{F}}$, and ${\bf{Q}}\in \mathbb{R}^{M\times K}$ is the rest $M$ rows of ${\bf{F}}$. ${{\bf{D}}_P}\in \mathbb{R}^{N\times N}$ and ${{\bf{D}}_Q}\in \mathbb{R}^{M\times M}$ are respectively the left-up block and right-bottom block of ${\bf{D}}$, whose diagonal elements are $\mathbf{D}_P(i,i)=\sum\nolimits_{j=1}^M{{\bf{Z}}^{(v)}(i,j)}$ and $\mathbf{D}_Q(j,j)=\sum\nolimits_{i=1}^N{{\bf{Z}}(i,j)}$. Substituting them and Eq. (\ref{P-3}) into the model (\ref{obj-jty-12}), and by simple algebra, the optimal solution of the model (\ref{obj-jty-12}) can be obtained by solving
\begin{equation}\label{obj-jty-13}
\begin{aligned}
&\mathop {\max }\limits_{{\bf{F}} \in {\mathbb{R}} {^{(N + M) \times K}}} tr{\bf{(}}{{\bf{P}}^{\textrm{{T}}}}\mathbf{H}{\bf{Q)}}\\
&\quad\textrm{s.t.}\quad{\rm{  }}{{\bf{P}}^{\textrm{{T}}}}{\bf{P + }}{{\bf{Q}}^{\textrm{{T}}}}{\bf{Q}} = {\bf{I}}
\end{aligned}
\end{equation}
where $\mathbf{H}={\bf{D}}_P^{ - \frac{1}{2}}{\bf{ZD}}_Q^{ - \frac{1}{2}}$. To solve the model (\ref{obj-jty-13}), we first introduce Theorem~\ref{lem2}
\begin{theorem}\label{lem2} Given $\mathbf{H}\in \mathbb{R}^{N\times M}$, $\mathbf{P}\in \mathbb{R}^{N\times K}$, $\mathbf{Q}\in \mathbb{R}^{M\times K}$. The optimal solutions of
\begin{equation}\label{P-1-1}
\max_{{\mathbf{P}^{\mathbf{T}}}{\mathbf{P}} + {\mathbf{Q}^{\textrm{{T}}}}{\mathbf{Q}} = {\bf{I}}} tr(\mathbf{P}^{\textrm{{T}}}\mathbf{H}\mathbf{Q})
\end{equation}
are $\mathbf{P}=\frac{\sqrt{2}}{2}\mathbf{U}_1$, $\mathbf{Q}=\frac{\sqrt{2}}{2}\mathbf{V}_1$, where $\mathbf{U}_1$ and $\mathbf{V}_1$ are the leading $K$ left and right singular vectors of $\mathbf{H}$, respectively.
\end{theorem}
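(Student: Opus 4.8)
The plan is to collapse the coupled constraint $\mathbf{P}^{\textrm{T}}\mathbf{P}+\mathbf{Q}^{\textrm{T}}\mathbf{Q}=\mathbf{I}$ into a single Stiefel-type constraint by stacking the two blocks, and then to recognize the objective as a Ky Fan trace maximization over a symmetric matrix whose spectrum is governed by the SVD of $\mathbf{H}$. First I would set $\mathbf{F}=\left[\begin{array}{c}\mathbf{P}\\\mathbf{Q}\end{array}\right]\in\mathbb{R}^{(N+M)\times K}$, so that the constraint reads exactly $\mathbf{F}^{\textrm{T}}\mathbf{F}=\mathbf{I}$. Introducing the symmetric block matrix $\mathbf{B}=\left[\begin{array}{cc}\mathbf{0}&\mathbf{H}\\\mathbf{H}^{\textrm{T}}&\mathbf{0}\end{array}\right]$, a direct block multiplication yields $tr(\mathbf{F}^{\textrm{T}}\mathbf{B}\mathbf{F})=tr(\mathbf{P}^{\textrm{T}}\mathbf{H}\mathbf{Q})+tr(\mathbf{Q}^{\textrm{T}}\mathbf{H}^{\textrm{T}}\mathbf{P})=2\,tr(\mathbf{P}^{\textrm{T}}\mathbf{H}\mathbf{Q})$, where the second equality uses $tr(\mathbf{Q}^{\textrm{T}}\mathbf{H}^{\textrm{T}}\mathbf{P})=tr\big((\mathbf{P}^{\textrm{T}}\mathbf{H}\mathbf{Q})^{\textrm{T}}\big)$. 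Hence maximizing $tr(\mathbf{P}^{\textrm{T}}\mathbf{H}\mathbf{Q})$ is equivalent to maximizing $\tfrac12\,tr(\mathbf{F}^{\textrm{T}}\mathbf{B}\mathbf{F})$ over $\mathbf{F}^{\textrm{T}}\mathbf{F}=\mathbf{I}$.

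Next, by Ky Fan's theorem (already invoked above), the maximum of $tr(\mathbf{F}^{\textrm{T}}\mathbf{B}\mathbf{F})$ subject to $\mathbf{F}^{\textrm{T}}\mathbf{F}=\mathbf{I}$ equals the sum of the $K$ largest eigenvalues of $\mathbf{B}$, attained when the columns of $\mathbf{F}$ form an orthonormal basis of the associated top eigenspace. It then remains to identify that eigenspace via the SVD. Writing $\mathbf{H}=\mathbf{U}\mathbf{\Sigma}\mathbf{V}^{\textrm{T}}$ with singular triplets $(\sigma_i,\mathbf{u}_i,\mathbf{v}_i)$ satisfying $\mathbf{H}\mathbf{v}_i=\sigma_i\mathbf{u}_i$ and $\mathbf{H}^{\textrm{T}}\mathbf{u}_i=\sigma_i\mathbf{v}_i$, I would verify by substitution that $\mathbf{B}\left[\begin{array}{c}\mathbf{u}_i\\\mathbf{v}_i\end{array}\right]=\sigma_i\left[\begin{array}{c}\mathbf{u}_i\\\mathbf{v}_i\end{array}\right]$ and $\mathbf{B}\left[\begin{array}{c}\mathbf{u}_i\\-\mathbf{v}_i\end{array}\right]=-\sigma_i\left[\begin{array}{c}\mathbf{u}_i\\-\mathbf{v}_i\end{array}\right]$, so the eigenvalues of $\mathbf{B}$ are $\pm\sigma_i$ together with zeros. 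The $K$ largest are the $K$ leading singular values $\sigma_1\ge\cdots\ge\sigma_K$, whose unit eigenvectors are $\tfrac{1}{\sqrt2}\left[\begin{array}{c}\mathbf{u}_i\\\mathbf{v}_i\end{array}\right]$, since $\|\mathbf{u}_i\|_2^2+\|\mathbf{v}_i\|_2^2=2$.

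Finally I would assemble these eigenvectors columnwise into $\mathbf{F}=\tfrac{1}{\sqrt2}\left[\begin{array}{c}\mathbf{U}_1\\\mathbf{V}_1\end{array}\right]$, check orthonormality of its columns (which follows from $\mathbf{u}_i^{\textrm{T}}\mathbf{u}_j=\mathbf{v}_i^{\textrm{T}}\mathbf{v}_j=\delta_{ij}$, giving $\tfrac12(\delta_{ij}+\delta_{ij})=\delta_{ij}$), and read off the top and bottom blocks to obtain $\mathbf{P}=\tfrac{\sqrt2}{2}\mathbf{U}_1$ and $\mathbf{Q}=\tfrac{\sqrt2}{2}\mathbf{V}_1$, with optimal value $\tfrac12\sum_{i=1}^K\sigma_i$. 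One can also confirm feasibility directly: $\mathbf{P}^{\textrm{T}}\mathbf{P}+\mathbf{Q}^{\textrm{T}}\mathbf{Q}=\tfrac12\mathbf{U}_1^{\textrm{T}}\mathbf{U}_1+\tfrac12\mathbf{V}_1^{\textrm{T}}\mathbf{V}_1=\mathbf{I}$.

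The main obstacle I anticipate is the spectral correspondence step. One must argue carefully that it is the positive branch $+\sigma_i$, rather than the negative branch $-\sigma_i$, that furnishes the top-$K$ eigenspace; handle possible repeated or vanishing singular values so that the top-$K$ eigenspace and its orthonormal basis are well defined; and note that the Ky Fan maximizer is determined only up to a right orthogonal factor, so the stated pair $(\mathbf{P},\mathbf{Q})$ should be presented as one representative optimizer rather than the unique solution.
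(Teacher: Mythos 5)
Your proof is correct and follows essentially the same route as the paper: both reduce the problem to the spectrum of the symmetric block matrix $\left[\begin{smallmatrix}\mathbf{0}&\mathbf{H}\\ \mathbf{H}^{\textrm{T}}&\mathbf{0}\end{smallmatrix}\right]$ acting on the stacked variable $\mathbf{F}$ and identify that spectrum with the singular values of $\mathbf{H}$. If anything, your version is tighter: by exhibiting the eigenpairs $\bigl(\pm\sigma_i,\tfrac{1}{\sqrt2}[\mathbf{u}_i;\pm\mathbf{v}_i]\bigr)$ explicitly and invoking Ky Fan, you justify the $\tfrac{\sqrt2}{2}$ normalization and the selection of the positive branch, two points the paper's argument (which instead writes the stationarity conditions $\tfrac12\mathbf{H}\mathbf{Q}=\mathbf{P}\mathbf{\Lambda}$, $\tfrac12\mathbf{H}^{\textrm{T}}\mathbf{P}=\mathbf{Q}\mathbf{\Lambda}$ and squares them) leaves implicit.
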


\begin{proof}\label{PRO1}
According to Eq. (\ref{obj-jty-13}), we have
\begin{equation}\label{nn1}
\begin{aligned}
tr&(\mathbf{P}^{\mathbf{T}}\mathbf{H}\mathbf{Q} )=\frac{1}{2}(tr(\mathbf{P}^{\textrm{{T}}}\mathbf{H}\mathbf{Q} + tr(\mathbf{Q}^{\textrm{{T}}}\mathbf{H}^{\textrm{{T}}}\mathbf{P}))\\
&=\frac{1}{2}tr\left({\left[ {\begin{array}{*{20}{c}}
{{{\bf{P}}}}\\
{{{\bf{Q}}}}
\end{array}} \right]^{\textrm{{T}}}}\left[ {\begin{array}{*{20}{c}}
{}&{\bf{H}}\\
{{{\bf{H}}^{\textrm{{T}}}}}&{}
\end{array}} \right]\left[ {\begin{array}{*{20}{c}}
{{{\bf{P}}}}\\
{{{\bf{Q}}}}
\end{array}} \right]\right)
\end{aligned}
\end{equation}
Then, the problem (\ref{obj-jty-13}) is equal to
\begin{equation}\label{nnM}
\begin{aligned}
&\mathop {\arg\max }\limits_{{{{\bf{P}}},{{\bf{Q}}}}} \frac{1}{2}tr\left\{ {{{\left[ {\begin{array}{*{20}{c}}
{{{\bf{P}}}}\\
{{{\bf{Q}}}}
\end{array}} \right]}^{\textrm{{T}}}}\left[ {\begin{array}{*{20}{c}}
{}&{\bf{H}}\\
{{{\bf{H}}^{\textrm{{T}}}}}&{}
\end{array}} \right]\left[ {\begin{array}{*{20}{c}}
{{{\bf{P}}}}\\
{{{\bf{Q}}}}
\end{array}} \right]} \right\}\\
&\quad \quad \quad \quad \quad \quad\textrm{s.t. }{{{\left[ {{{\bf{P}}}\;{{\bf{Q}}}} \right]}^{\textrm{{T}}}}\left[ {{{\bf{P}}}\;{{\bf{Q}}}} \right] = {\bf{I}}}
\end{aligned}
\end{equation}

The optimal solution of the problem (\ref{nnM}) can be solved by
\begin{equation}\label{nn5}
\frac{1}{2}\left[ {\begin{array}{*{20}{c}}
{}&{\bf{H}}\\
{{{\bf{H}}^{\textrm{{T}}}}}&{}
\end{array}} \right]\left[ {\begin{array}{*{20}{c}}
{{{\bf{P}}}}\\
{{{\bf{Q}}}}
\end{array}} \right] = \left[ {\begin{array}{*{20}{c}}
{{{\bf{P}}}}\\
{{{\bf{Q}}}}
\end{array}} \right]{\bf{\Lambda }}
\end{equation}
where ${\bf{\Lambda }}$ is diagonal matrix whose elements are composed of eigenvalues of $\frac{1}{2}\left[ {\begin{array}{*{20}{c}}
{}&{\bf{H}}\\
{{{\bf{H}}^{\textrm{{T}}}}}&{}
\end{array}} \right]$.

By simple algebra, we have
\begin{equation}\label{nn3}
\left\{ \begin{array}{l}
\frac{1}{2}{\bf{H}}{{\bf{Q}}} = {{\bf{P}}}{\bf{\Lambda }}\\
\frac{1}{2}{{\bf{H}}^{\textrm{{T}}}}{{\bf{P}}} = {{\bf{Q}}}{\bf{\Lambda }}
\end{array} \right.
\end{equation}

Then,
\begin{equation}\label{nn8}
\left\{ \begin{array}{l}
{{(\sqrt 2 } \mathord{\left/
 {\vphantom {{(\sqrt 2 } 2}} \right.
 \kern-\nulldelimiterspace} 2}{\bf{H}}{)^{\textrm{{T}}}}{{(\sqrt 2 } \mathord{\left/
 {\vphantom {{(\sqrt 2 } 2}} \right.
 \kern-\nulldelimiterspace} 2}{\bf{H}}){{\bf{Q}}} = {{\bf{Q}}}{(\sqrt 2 {\bf{\Lambda }})^2}\\
{{(\sqrt 2 } \mathord{\left/
 {\vphantom {{(\sqrt 2 } 2}} \right.
 \kern-\nulldelimiterspace} 2}{\bf{H}}){{(\sqrt 2 } \mathord{\left/
 {\vphantom {{(\sqrt 2 } 2}} \right.
 \kern-\nulldelimiterspace} 2}{\bf{H}}{)^{\textrm{{T}}}}{{\bf{P}}} = {{\bf{P}}}{(\sqrt 2 {\bf{\Lambda }})^2}
\end{array} \right.
\end{equation}

According to the model (\ref{nn8}), ${{\bf{P}}}$ and ${{\bf{Q}}}$ are composed of the leading $K$ left and right singular vectors of ${{\sqrt 2 } \mathord{\left/
 {\vphantom {{(\sqrt 2 } 2}} \right.
 \kern-\nulldelimiterspace} 2}{\bf{H}}$. Denote by $\mathbf{U}_1$ and $\mathbf{V}_1$ are the leading $K$ left and right singular vectors of $\mathbf{H}$, respectively. Thus, we have ${{\bf{Q}}} = \frac{\sqrt{2}}{2}{{\bf{V}}_1}$, ${{\bf{P}}} = \frac{\sqrt{2}}{2}{{\bf{U}}_1}$.
\end{proof}

According to Theorem~\ref{lem1}, the optimal $\mathbf{F}^{\ast}$ in Eq. (\ref{obj-jty-13}) is $\mathbf{F}^{\ast} = {{\sqrt 2 } \mathord{\left/ {\vphantom {{\sqrt 2 } 2}} \right. \kern-\nulldelimiterspace} 2}{[{\bf{U}}_1^\textrm{T}\;{\bf{V}}_1^\textrm{T}]^\textrm{T}}$, where $\mathbf{U}_1$ and $\mathbf{V}_1$ can be obtained by performing SVD on $\mathbf{H}$, it takes $\bm{\mathcal{O}}(VNM + M^2N)$ time, which scales linearly with the data size. Hence, toward large-scale clustering, due to the number of anchors $M\ll N$, it is more efficient to directly solve Eq. (\ref{obj-jty-12}) by tackling Eq. (\ref{obj-jty-13}) instead.

$\bullet$ \textbf{Solving $\mathcal{{J}}$ with fixed $\mathbf{Z}^{(v)}$ and $\textbf{F}$}. In this case, $\mathcal{{J}}$ can be solved by
\begin{equation}\label{obj-jty-14}
\begin{aligned}
{\bm{{\mathcal J}}^*} &= \mathop {\arg \min }\limits_{\bm{\mathcal {J}}} { \| \bm{{\mathcal J}} \|_{\Sp}^p} + \langle {\bm{\mathcal Y}},\bm{{\mathcal Z} - \bm{\mathcal J}} \rangle  + \frac{\mu }{2}\| {\bm{{\mathcal Z}} - \bm{{\mathcal J}}} \|_F^2\\
&= \mathop {\arg \min }\limits_{\bm{\mathcal J}} \frac{1}{\mu}{ \| \bm{{\mathcal J}} \|_{\Sp}^p} + \frac{1}{2}\| {\bm{{\mathcal Z}}  + \frac{{\bm{\mathcal Y}}}{\mu }} - \bm{{\mathcal J}}\|_F^2
\end{aligned}
\end{equation}

To solve the model (\ref{obj-jty-14}), we introduce the Theorem~\ref{theorem1}.
\begin{theorem}\label{theorem1}~\cite{TPAMI}
Suppose $\bm{{\mathcal X}} \in {\mathbb{R}^{{n_1} \times {n_2} \times {n_3}}}$, $h = \min ({n_1},{n_2})$, let $\bm{{\mathcal X}} = \bm{{\mathcal U}}*\bm{{\mathcal S}}*{\bm{{\mathcal V}}^T}$. For the following model:
\begin{equation}\label{t4-1}
\mathop {{\mathop{\rm argmin}\nolimits} }\limits_{\bm{\mathcal J}} \frac{1}{2}\| {\bm{\mathcal J}} - {\bm{{\mathcal X}}} \|_F^2 + \tau {\| \bm{{\mathcal J}} \|_{\Sp}^p}
\end{equation}
the optimal solution ${{\bm{{\mathcal J}}}^*}$ is
\begin{equation}\label{t4-2}
{{\bm{{\mathcal J}}}^{\ast}} = {\Gamma _{\tau \cdot{n_3}}}({\bm{{\mathcal X}}}) = {\bm{{\mathcal U}}}*ifft({{\bf{P}}_{\tau \cdot{n_3}}}(\overline {\bm{{\mathcal X}}} ))*{{\bm{{\mathcal V}}}^\textrm{{T}}}
\end{equation}
where, ${{\bf{P}}_{\tau \cdot{n_3}}}(\overline {\bm{\mathcal X}} )$ is a tensor, ${{\bf{P}}_{\tau \cdot{n_3}}}({\overline {{\bm{\mathcal X}}}^{(i)}})$ is the $i$-th frontal slice of ${{\bf{P}}_{\tau \cdot{n_3}}}(\overline {\bm{\mathcal X}} )$.
\end{theorem}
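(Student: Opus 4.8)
The plan is to pass to the Fourier domain along the third mode, where both terms of the objective decouple across the $n_3$ frontal slices, and then to solve each resulting matrix-level Schatten $p$-norm proximal problem in closed form. First I would invoke Parseval's identity for the unnormalized DFT: since $\overline{\bm{\mathcal{X}}} = \mathrm{fft}(\bm{\mathcal{X}},[\,],3)$, one has $\|\bm{\mathcal{J}}-\bm{\mathcal{X}}\|_F^2 = \frac{1}{n_3}\|\overline{\bm{\mathcal{J}}}-\overline{\bm{\mathcal{X}}}\|_F^2 = \frac{1}{n_3}\sum_{i=1}^{n_3}\|\overline{\bm{\mathcal{J}}}^{(i)}-\overline{\bm{\mathcal{X}}}^{(i)}\|_F^2$. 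By Definition~\ref{definitionTSP}, the regularizer already splits as $\|\bm{\mathcal{J}}\|_{\Sp}^p = \sum_{i=1}^{n_3}\|\overline{\bm{\mathcal{J}}}^{(i)}\|_{\Sp}^p$. Substituting and multiplying through by the positive constant $n_3$ (which leaves the minimizer unchanged), the problem (\ref{t4-1}) separates into $n_3$ independent matrix problems, the $i$-th being $\min_{\mathbf{J}} \frac{1}{2}\|\mathbf{J}-\overline{\bm{\mathcal{X}}}^{(i)}\|_F^2 + \tau n_3\|\mathbf{J}\|_{\Sp}^p$; this is precisely where the thresholding level $\tau\cdot n_3$ appearing in $\Gamma_{\tau\cdot n_3}$ and $\mathbf{P}_{\tau\cdot n_3}$ originates.

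Next I would solve a single such matrix subproblem. Writing the SVD $\overline{\bm{\mathcal{X}}}^{(i)} = \mathbf{U}_i\mathbf{\Sigma}_i\mathbf{V}_i^{\textrm{T}}$, the key structural fact is that an optimal $\mathbf{J}$ admits the SVD $\mathbf{J}=\mathbf{U}_i\,\mathrm{diag}(\bm{\delta})\,\mathbf{V}_i^{\textrm{T}}$ with the same left and right singular vectors. I would establish this by expanding $\frac{1}{2}\|\mathbf{J}-\overline{\bm{\mathcal{X}}}^{(i)}\|_F^2 = \frac{1}{2}\|\mathbf{J}\|_F^2 - \mathrm{tr}(\mathbf{J}^{\textrm{T}}\overline{\bm{\mathcal{X}}}^{(i)}) + \frac{1}{2}\|\overline{\bm{\mathcal{X}}}^{(i)}\|_F^2$ and applying the von Neumann trace inequality $\mathrm{tr}(\mathbf{J}^{\textrm{T}}\overline{\bm{\mathcal{X}}}^{(i)}) \le \sum_j \sigma_j(\mathbf{J})\,\sigma_j(\overline{\bm{\mathcal{X}}}^{(i)})$, with equality exactly when the two matrices share singular vectors. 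Because the Schatten $p$-norm depends only on the singular values $\bm{\delta}$ of $\mathbf{J}$ and is unitarily invariant, aligning the singular vectors maximizes the negative cross term without affecting the remaining terms, so an optimal $\mathbf{J}$ may be taken to have singular vectors $\mathbf{U}_i,\mathbf{V}_i$. The problem then collapses to the separable scalar minimization $\min_{\delta_j\ge 0} \frac{1}{2}(\delta_j-\sigma_j(\overline{\bm{\mathcal{X}}}^{(i)}))^2 + \tau n_3\,\delta_j^{\,p}$ for each $j$, whose solution defines the generalized singular-value shrinkage operator $\mathbf{P}_{\tau\cdot n_3}$ acting slice-wise on $\overline{\bm{\mathcal{X}}}$.

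Finally I would reassemble the slices. Since $\overline{\bm{\mathcal{J}}}^{(i)} = \mathbf{P}_{\tau\cdot n_3}(\overline{\bm{\mathcal{X}}}^{(i)}) = \mathbf{U}_i\,\mathrm{diag}(\bm{\delta}^{(i)})\,\mathbf{V}_i^{\textrm{T}}$ and the factors $\mathbf{U}_i,\mathbf{V}_i$ are exactly the Fourier-domain frontal slices of the t-SVD factors $\bm{\mathcal{U}},\bm{\mathcal{V}}$, applying $\mathrm{ifft}(\cdot,[\,],3)$ and recognising the definition of the t-product yields $\bm{\mathcal{J}}^\ast = \bm{\mathcal{U}}*\mathrm{ifft}(\mathbf{P}_{\tau\cdot n_3}(\overline{\bm{\mathcal{X}}}))*\bm{\mathcal{V}}^{\textrm{T}}$, which is the claimed (\ref{t4-2}). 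The main obstacle is the middle step: for $0<p\le 1$ the quasi-norm $\|\cdot\|_{\Sp}^p$ is non-convex, so the von Neumann inequality only guarantees that the singular vectors can be aligned, while the scalar subproblem $\frac{1}{2}(\delta-\sigma)^2+\tau n_3\delta^p$ may itself be non-convex with several stationary points; I would then need to argue that $\mathbf{P}_{\tau\cdot n_3}$ returns the global minimizer of this one-dimensional problem, for instance by comparing the interior roots of its optimality condition against the boundary value $\delta=0$, which is exactly the content I would borrow from the cited result~\cite{TPAMI}.
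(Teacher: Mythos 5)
The paper does not actually prove this theorem --- it is imported verbatim from \cite{TPAMI} --- so there is no internal proof to compare against. Your sketch correctly reproduces the standard argument behind that cited result: Parseval's identity for the unnormalized DFT decouples the objective into $n_3$ frontal-slice problems and accounts for the $\tau\cdot n_3$ threshold, the von Neumann trace inequality aligns the singular vectors of the optimizer with those of $\overline{\bm{\mathcal X}}{}^{(i)}$, and the problem collapses to a one-dimensional generalized shrinkage; you also rightly identify that the only nontrivial remaining content is certifying the global minimizer of the non-convex scalar problem $\frac{1}{2}(\delta-\sigma)^2+\tau n_3\delta^p$ for $0<p\le 1$, which is precisely what \cite{TPAMI} supplies.
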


Thus, according to Theorem~\ref{theorem1}, the solution of the model (\ref{obj-jty-14}) is
\begin{equation}\label{o4-1}
{\bm{{\mathcal J}}^{\ast}}{\rm{ = }}{\Gamma _{\frac{1}{\mu }}}\left(\bm{{\mathcal Z}} + \frac{1}{\mu }\bm{{\mathcal Y}}\right).
\end{equation}

$\bullet$ \textbf{Solving $\mathbf{Z}^{(v)}$ with fixed $\mathcal{{J}}$ and $\textbf{F}$}. Now, the optimization w.r.t. $\mathbf{Z}^{(v)}$ in the model (\ref{obj-jty-10}) is equivalent to
\begin{equation}\label{obj-jty-19}
\begin{aligned}
{{\mathbf{Z}^{(v)}}^*} &= \mathop {\min }\limits_{\mathbf{Z}^{(v)}}\sum\limits_{v = 1}^V \left({\sqrt {\sum\limits_{i = 1}^N {\sum\limits_{j = 1}^M {\| {{\bf{x}}_i^{(v)}{\bf{ - a}}_j^{(v)}} \|_2^2{Z_{ij}^{(v)}}} } } {\bf{ + }}\alpha \| {\mathbf{{Z}}^{(v)}} \|_F^2}\right)\\
&\quad\quad\quad+\beta {tr({\bf{F}}^{\textrm{T}}{{\widetilde {\bf{L}}}_{{{\bf{Z}}}}}{\bf{F}})}+\frac{\mu}{2}\sum\limits_{v = 1}^V\left\| {{{\bf{Z}}^{(v)}}{\bf{ - }}{{\bf{J}}^{(v)}}{\bf{ + }}\frac{{{\bf{Y}}^{(v)}}}{\mu}} \right\|_F^2
\end{aligned}
\end{equation}

Due to the fact that
\begin{equation}\label{obj-jty-20}
\begin{aligned}
tr{\bf{(}}{{\bf{F}}^{\textrm{T}}}{{\bf{\tilde L}}_{\bf{Z}}}{\bf{F) = }}\sum\limits_{i = 1}^N {\sum\limits_{j = 1}^M {\left\|\frac{{{\bf{p}}_i} }{{\sqrt {{D_P}(i,i)} }}-\frac{{{\bf{q}}_i} }{{\sqrt {{D_Q}(j,j)} }} \right\|_2^2} } {Z_{ij}}
\end{aligned}
\end{equation}
thus, the model (\ref{obj-jty-20}) can be rewritten as
\begin{equation}\label{obj-jty-21}
\begin{aligned}
{{\mathbf{Z}^{(v)}}^*} &= \mathop {\min }\limits_{\mathbf{Z}^{(v)}}\sum\limits_{v = 1}^V {\sqrt {\sum\limits_{i = 1}^N {\sum\limits_{j = 1}^M {\| {{\bf{x}}_i^{(v)}{\bf{ - a}}_j^{(v)}} \|_2^2{Z_{ij}^{(v)}}} } } {\bf{ + }}\alpha \| {\mathbf{{Z}}^{(v)}} \|_F^2}\\
&\quad\quad\quad\quad+ \frac{\beta}{V}\sum\limits_{i = 1}^N {\sum\limits_{j = 1}^M {\left\|\frac{{{\bf{p}}_i} }{{\sqrt {{D_P}(i,i)} }}-\frac{{{\bf{q}}_i} }{{\sqrt {{D_Q}(j,j)} }} \right\|_2^2} } {Z_{ij}^{(v)}}\\
&\quad\quad\quad\quad+\frac{\mu}{2}\sum\limits_{v = 1}^V\left\| {{{\bf{Z}}^{(v)}}{\bf{ - }}{{\bf{J}}^{(v)}}{\bf{ + }}\frac{{{\bf{Y}}^{(v)}}}{\mu}} \right\|_F^2
\end{aligned}
\end{equation}

For convenience, denote by ${{\bf{g}}^{(v)}}=\sqrt{\sum\limits_{i = 1}^N {\sum\limits_{j = 1}^M {\| {{\bf{x}}_i^{(v)}{\bf{ - a}}_j^{(v)}} \|_2^2{Z_{ij}^{(v)}}} } }$, $d_{ij}^{(v)}{\bf{ = }}{\left\| {{\bf{x}}_i^{(v)}{\bf{ - a}}_j^{(v)}} \right\|_2^2}$, $d_{ij}^f{\bf{ = }}{\left\|\frac{{{\bf{p}}_i} }{{\sqrt {{D_P}(i,i)} }}-\frac{{{\bf{q}}_i} }{{\sqrt {{D_Q}(j,j)} }} \right\|_2^2}$, $\mathbf{E}^{(v)}={{\bf{J}}^{(v)}}\mathbf{-}\frac{{{\bf{Y}}^{(v)}}}{\mu}$. The model (\ref{obj-jty-21}) can be rewritten as
\begin{equation}\label{obj-jty-23}
\begin{aligned}
{{\mathbf{Z}^{(v)}}^*} &= \mathop {\min }\limits_{\mathbf{Z}^{(v)}}\sum\limits_{v = 1}^V \Bigg\{\sum\limits_{i = 1}^N {\sum\limits_{j = 1}^M {\left(\frac{\mu}{2} + \alpha \right)} } {{({Z_{ij}^{(v)}})}^2}\\&\quad\quad\quad\quad\quad+\left({\frac{{d_{ij}^{(v)}}}{{{\bf{g}}^{(v)}}}} - \mu E_{ij}^{(v)} + \frac{\beta}{V}d_{ij}^f\right){{{Z_{ij}^{(v)}}}}
\Bigg\}
\end{aligned}
\end{equation}
where $E_{ij}^{(v)}$ is the $(i,j)$-th element of matrix ${{\bf{E}}^{(v)}}$.

Note that the model (\ref{obj-jty-23}) is independent between different $i$, we can solve it individually for each data point $i$.  Denote by $\sigma_{i,j}^{(v)}={\frac{{d_{ij}^{(v)}}}{{{\bf{g}}^{(v)}}}} - \mu E_{ij}^{(v)} + \frac{\beta}{V}d_{ij}^f$, the model (\ref{obj-jty-23}) can be also written as
\begin{equation}\label{obj-jty-24}
\begin{aligned}
{{\mathbf{z}_i^{(v)}}^*} &= \mathop {\min }\limits_{\mathbf{z}_i^{(v)}}\left\| {{\mathbf{z}_i^{(v)}}-\frac{-\sigma_{i}^{(v)}}{\mu  + 2\alpha }} \right\|_2^2
\end{aligned}
\end{equation}

Thus, the closed-form solution is $\mathbf{z}_{i}^{( v )^{\ast}}=(\frac{-\sigma_{i}^{(v)}}{\mu  + 2\alpha }+\gamma \mathbf{1} ) _+$, where $\gamma$ is the Lagrangian multiplier~\cite{NieWJH16}.

In this paper, we utilize the same way as reported in ~\cite{ref_MLAN} to calculate adaptive parameters $\alpha $. Finally, the optimization procedure for solving the model (\ref{obj-jty-10}) is outlined in Algorithm~\ref{A1}.
\begin{algorithm}[!t]
  \caption{Multiple Graph Learning for Scalable Multi-view Clustering}
  \label{A1}
  \LinesNumbered
  \KwIn{Data matrices: $\{\mathbf{X}^{(v)}\}_{v=1}^V\in \mathbb{R}^{N \times d_v}$, the number of anchors $M$, cluster number $K$, $\lambda$.}
  \KwOut{Graph $\bm{\mathbf{Z}}$ with $K$-connected components.}
  Initialize ${{\bf{Z}}^{(v)}}$ and ${{\bf{A}}^{(v)}}$ as reported in ~\cite{SFMC}, $\mu=10^{-5}$, $\mu_{max}=10^{12}$, $\eta=1.1$, calculate $\alpha$ as reported in ~\cite{ref_MLAN}\;
  \While{not converge}
  {
  \tcp{ for $\mathbf{F}$}
  Update $\mathbf{P}$ and $\mathbf{Q}$ by solving (\ref{obj-jty-13})\;
  \tcp{ for $\bm{\mathcal{J}}$}
  Update $\bm{\mathcal{J}}$ by using (\ref{o4-1})\;
  \tcp{ for $\mathbf{Z}^{(v)}$}
  Update $\{\mathbf{Z}^{(v)}\}_{v=1}^V$ by solving (\ref{obj-jty-24})\;

  \tcp{ for variables $\mathcal{{Y}}$ and $\mu$}
  $\bm{\mathcal{Y}} := \bm{\mathcal{Y}} + \mu(\bm{\mathcal{Z}}-\bm{\mathcal{J}})$\;
  $\mu := min(\eta \mu, \mu_{max})$\;
  }
  Directly achieve the $K$ clusters based on the connectivity of the graph $\mathbf{Z}$\;
  \textbf{return:} Clustering label $\mathbf{L}$.
\end{algorithm}

\subsection{Computational Complexity Analysis}
The computational bottleneck of the proposed method only lies in constructing graphs $\mathbf{Z}^{(v)}$, and solving the subproblems for $\mathbf{Z}^{(v)}$, $\mathbf{F}$ and $\bm{\mathcal{J}}$. As for constructing graphs $\mathbf{Z}^{(v)}$, it takes $\bm{\mathcal{O}}(VNMd)$ time, where $V$, $N$, $M$ and $d$ are number of views, data points, anchors and feature dimensionality, respectively. As for the $\mathbf{Z}^{(v)}$ subproblem, it takes $\bm{\mathcal{O}}(VNMK+VNM\log(M))$ time, where $K$ is the number of clusters. As for the $\mathbf{F}$ subproblem, it takes $\bm{\mathcal{O}}(VNM + M^2N)$ time. As for the $\bm{\mathcal{J}}$ subproblem, it takes $\bm{\mathcal{O}}(VNM\log(VN)+V^2MN)$ time. Considering number of iteration $T$, and the fact $M\ll N$, $K$ is a small constant, the computational complexity of our proposed method is $\bm{\mathcal{O}}(T\cdot N(M^2+VMd) )$, which scales linearly with the data size $N$.

\section{Experiment}
\subsection {Experimental Setup}
\begin{table*}[!t]
\begin{center}
\caption{The clustering performances on MSRC-v5 and Handwritten4 datasets.}
\label{result1}
\resizebox{1.6\columnwidth}{!}{
\begin{tabular}{c!{\vrule width1.2pt}ccccccc}
\Xhline{2pt}
Dataset&\multicolumn{7}{c}{\textbf{MSRC-v5}}\\
\hline
Metric&ACC&NMI&Purity&PER&REC&F-score&ARI\\
\Xhline{1.2pt}
s-CLR ($\mathbf{X}^{(1)}$)%~\cite{NieWJH16}
&0.333$\pm$0.000&0.226$\pm$0.000&0.381$\pm$0.000&0.199$\pm$0.000&0.507$\pm$0.000&0.286$\pm$0.000&0.108$\pm$0.000\\
s-CLR ($\mathbf{X}^{(2)}$)%~\cite{NieWJH16}
&0.681$\pm$0.000&0.608$\pm$0.000&0.710$\pm$0.000&0.478$\pm$0.000&0.707$\pm$0.000&0.570$\pm$0.000&0.471$\pm$0.000\\
s-CLR ($\mathbf{X}^{(3)}$)%~\cite{NieWJH16}
&0.648$\pm$0.000&0.595$\pm$0.000&0.652$\pm$0.000&0.428$\pm$0.000&0.688$\pm$0.000&0.528$\pm$0.000&0.467$\pm$0.000\\
s-CLR ($\mathbf{X}^{(4)}$)%~\cite{NieWJH16}
&0.400$\pm$0.000&0.397$\pm$0.000&0.486$\pm$0.000&0.307$\pm$0.000&0.421$\pm$0.000&0.355$\pm$0.000&0.231$\pm$0.000\\
s-CLR ($\mathbf{X}^{(5)}$)%~\cite{NieWJH16}
&0.614$\pm$0.000&0.529$\pm$0.000&0.638$\pm$0.000&0.461$\pm$0.000&0.634$\pm$0.000&0.534$\pm$0.000&0.445$\pm$0.000\\
s-CLR-Concat%~\cite{NieWJH16}
&0.590$\pm$0.000&0.509$\pm$0.000&0.614$\pm$0.000&0.451$\pm$0.000&0.482$\pm$0.000&0.466$\pm$0.000&0.377$\pm$0.000\\
Co-reg%~\cite{KumarRD11r}
&0.635$\pm$0.007&0.578$\pm$0.006&0.659$\pm$0.006&0.511$\pm$0.008&0.535$\pm$0.007&0.522$\pm$0.007&0.425$\pm$0.030\\
SwMC%~\cite{NieLL17}
&0.776$\pm$0.000&0.774$\pm$0.000&0.805$\pm$0.000&0.687$\pm$0.000&0.831$\pm$0.000&0.752$\pm$0.000&0.708$\pm$0.000\\
MVGL%~\cite{ZhanZGW18}
&0.690$\pm$0.000&0.663$\pm$0.000&0.733$\pm$0.000&0.466$\pm$0.000&0.715$\pm$0.000&0.564$\pm$0.000&0.476$\pm$0.000\\
MVSC%~\cite{LiNHH15}
&0.794$\pm$0.075&0.672$\pm$0.058&0.756$\pm$0.071&0.585$\pm$0.091&0.779$\pm$0.035&0.664$\pm$0.062&0.600$\pm$0.079\\
RDEKM%~\cite{XuHNL17}
&0.738$\pm$0.000&0.650$\pm$0.000&0.738$\pm$0.000&0.594$\pm$0.000&0.647$\pm$0.000&0.619$\pm$0.000&0.555$\pm$0.000\\
SMSC%~\cite{HuNWL20}
&0.766$\pm$0.000&0.717$\pm$0.000&0.804$\pm$0.000&0.672$\pm$0.000&0.718$\pm$0.000&0.694$\pm$0.000&0.643$\pm$0.000\\
AMGL%~\cite{NieLL16}
&0.751$\pm$0.078&0.704$\pm$0.044&0.789$\pm$0.056&0.621$\pm$0.090&0.744$\pm$0.026&0.674$\pm$0.063&0.615$\pm$0.079\\
MLAN%~\cite{ref_MLAN}
&0.681$\pm$0.000&0.630$\pm$0.000&0.733$\pm$0.000&0.494$\pm$0.000&0.718$\pm$0.000&0.694$\pm$0.000&0.643$\pm$0.000\\
RMSC&0.762$\pm$0.040&0.663$\pm$0.026&0.769$\pm$0.030&0.640$\pm$0.030&0.660$\pm$0.034&0.650$\pm$0.031&0.592$\pm$0.036\\
CSMSC&0.758$\pm$0.007&0.735$\pm$0.010&0.793$\pm$0.008&0.736$\pm$0.014&0.673$\pm$0.008&0.703$\pm$0.010&0.653$\pm$0.012\\
SFMC%~\cite{SFMC}
&0.810$\pm$0.000&0.721$\pm$0.000&0.810$\pm$0.000&0.657$\pm$0.000&0.782$\pm$0.000&0.714$\pm$0.000&0.663$\pm$0.000\\
\textbf{Ours}&\textbf{0.933}$\pm$\textbf{0.000}&\textbf{0.893}$\pm$\textbf{0.000}&\textbf{0.933}$\pm$\textbf{0.000}&\textbf{0.848}$\pm$\textbf{0.000}&\textbf{0.885}$\pm$\textbf{0.000}&\textbf{0.866}$\pm$\textbf{0.000}&\textbf{0.844}$\pm$\textbf{0.000}\\
\Xhline{2pt}
Dataset&\multicolumn{7}{c}{\textbf{Handwritten4}}\\
\hline
Metric&ACC&NMI&Purity&PER&REC&F-score&ARI\\
\Xhline{1.2pt}
s-CLR ($\mathbf{X}^{(1)}$)%~\cite{NieWJH16}
&0.660$\pm$0.000&0.683$\pm$0.000&0.699$\pm$0.000&0.527$\pm$0.000&0.722$\pm$0.000&0.609$\pm$0.000&0.558$\pm$0.000\\
s-CLR ($\mathbf{X}^{(2)}$)%~\cite{NieWJH16}
&0.698$\pm$0.000&0.731$\pm$0.000&0.731$\pm$0.000&0.592$\pm$0.000&0.803$\pm$0.000&0.681$\pm$0.000&0.640$\pm$0.000\\
s-CLR ($\mathbf{X}^{(3)}$)%~\cite{NieWJH16}
&0.660$\pm$0.000&0.651$\pm$0.000&0.661$\pm$0.000&0.441$\pm$0.000&0.760$\pm$0.000&0.558$\pm$0.000&0.495$\pm$0.000\\
s-CLR ($\mathbf{X}^{(4)}$)%~\cite{NieWJH16}
&0.403$\pm$0.000&0.452$\pm$0.000&0.426$\pm$0.000&0.310$\pm$0.000&0.383$\pm$0.000&0.343$\pm$0.000&0.262$\pm$0.000\\
s-CLR-Concat%~\cite{NieWJH16}
&0.759$\pm$0.000&0.751$\pm$0.000&0.760$\pm$0.000&0.610$\pm$0.000&0.865$\pm$0.000&0.716$\pm$0.000&0.678$\pm$0.000\\
Co-reg%~\cite{KumarRD11r}
&0.784$\pm$0.010&0.758$\pm$0.004&0.795$\pm$0.008&0.698$\pm$0.010&0.724$\pm$0.005&0.710$\pm$0.007&0.667$\pm$0.037\\
SwMC%~\cite{NieLL17}
&0.758$\pm$0.000&0.833$\pm$0.000&0.792$\pm$0.000&0.686$\pm$0.000&0.867$\pm$0.000&0.766$\pm$0.000&0.737$\pm$0.000\\
MVGL%~\cite{ZhanZGW18}
&0.811$\pm$0.000&0.809$\pm$0.000&0.831$\pm$0.000&0.721$\pm$0.000&0.826$\pm$0.000&0.770$\pm$0.000&0.743$\pm$0.000\\
MVSC%~\cite{LiNHH15}
&0.796$\pm$0.059&0.820$\pm$0.030&0.808$\pm$0.044&0.715$\pm$0.082&0.838$\pm$0.035&0.769$\pm$0.046&0.741$\pm$0.053\\
RDEKM%~\cite{XuHNL17}
&0.805$\pm$0.000&0.803$\pm$0.000&0.842$\pm$0.000&0.714$\pm$0.000&0.806$\pm$0.000&0.757$\pm$0.000&0.728$\pm$0.000\\
SMSC%~\cite{HuNWL20}
 &0.742$\pm$0.000&0.781$\pm$0.000&0.759$\pm$0.000&0.675$\pm$0.000&0.767$\pm$0.000&0.717$\pm$0.000&0.685$\pm$0.000\\
AMGL
%~\cite{NieLL16}
 &0.704$\pm$0.045&0.762$\pm$0.040&0.732$\pm$0.042&0.591$\pm$0.081&0.781$\pm$0.022&0.670$\pm$0.060&0.628$\pm$0.070\\
MLAN%~\cite{ref_MLAN}
&0.778$\pm$0.045&0.832$\pm$0.027&0.812$\pm$0.045&0.706$\pm$0.053&0.871$\pm$0.017&0.779$\pm$0.039&0.752$\pm$0.044\\
RMSC&0.681$\pm$0.043&0.661$\pm$0.022&0.713$\pm$0.037&0.582$\pm$0.035&0.617$\pm$0.026&0.599$\pm$0.030&0.553$\pm$0.034\\
CSMSC&0.806$\pm$0.001&0.793$\pm$0.001&0.867$\pm$0.001&0.778$\pm$0.001&0.743$\pm$0.001&0.760$\pm$0.001&0.733$\pm$0.001\\
SFMC%~\cite{SFMC}
&0.853$\pm$0.000&0.871$\pm$0.000&0.873$\pm$0.000&0.775$\pm$0.000&0.910$\pm$0.000&0.837$\pm$0.000&0.817$\pm$0.000\\
\textbf{Ours}&\textbf{0.992}$\pm$\textbf{0.000}&\textbf{0.980}$\pm$\textbf{0.000}&\textbf{0.992}$\pm$\textbf{0.000}&\textbf{0.983}$\pm$\textbf{0.000}&\textbf{0.983}$\pm$\textbf{0.000}&\textbf{0.983}$\pm$\textbf{0.000}&\textbf{0.981}$\pm$\textbf{0.000}\\
%\textbf{Ours}$^\dagger$&\textbf{0.926}$\pm$\textbf{0.000}&\textbf{0.885}$\pm$\textbf{0.000}&\textbf{0.926}$\pm$\textbf{0.000}&\textbf{0.816}$\pm$\textbf{0.000}&0.869$\pm$0.000&\textbf{0.842}$\pm$\textbf{0.000}&\textbf{0.823}$\pm$\textbf{0.000}\\
\Xhline{2pt}
\end{tabular}}
\end{center}
\end{table*}

\begin{table*}[!t]
\begin{center}
\caption{The clustering performances on Mnist4 and Caltech101-20 datasets. Best results are highlighted with \textbf{bold} numbers.}
\label{result2}
\resizebox{1.6\columnwidth}{!}{
\begin{tabular}{c!{\vrule width1.2pt}ccccccc}
\Xhline{2pt}
Dataset&\multicolumn{7}{c}{\textbf{Mnist4}}\\
\hline
Metric&ACC&NMI&Purity&PER&REC&F-score&ARI\\
\Xhline{1.2pt}
s-CLR ($\mathbf{X}^{(1)}$)%~\cite{NieWJH16}
 &0.660$\pm$0.000&0.679$\pm$0.000&0.742$\pm$0.000&0.626$\pm$0.000&0.798$\pm$0.000&0.701$\pm$0.000&0.585$\pm$0.000\\
s-CLR ($\mathbf{X}^{(2)}$)%~\cite{NieWJH16}
&0.843$\pm$0.000&0.762$\pm$0.000&0.744$\pm$0.000&0.640$\pm$0.000&0.824$\pm$0.000&0.721$\pm$0.000&0.655$\pm$0.000\\
s-CLR($\mathbf{X}^{(3)}$)%~\cite{NieWJH16}
&0.743$\pm$0.000&0.661$\pm$0.000&0.744$\pm$0.000&0.642$\pm$0.000&0.827$\pm$0.000&0.723$\pm$0.000&0.657$\pm$0.000\\
s-CLR-Concat%$^\dagger$%~\cite{NieWJH16}
&0.897$\pm$0.000&0.747$\pm$0.000&0.897$\pm$0.000&0.813$\pm$0.000&0.822$\pm$0.000&0.817$\pm$0.000&0.756$\pm$0.000\\
Co-reg%~\cite{KumarRD11r}
&0.785$\pm$0.003&0.602$\pm$0.001&0.786$\pm$0.002&0.670$\pm$0.002&0.696$\pm$0.002&0.682$\pm$0.001&0.575$\pm$0.002\\
SwMC%$^\dagger$%~\cite{NieLL17}
&0.914$\pm$0.000&0.799$\pm$0.000&0.912$\pm$0.000&0.844$\pm$0.000&0.852$\pm$0.000&0.848$\pm$0.000&0.799$\pm$0.000\\
MVGL%$^\dagger$%~\cite{ZhanZGW18}
&0.912$\pm$0.000&0.785$\pm$0.000&0.910$\pm$0.000&0.795$\pm$0.000&0.804$\pm$0.000&0.800$\pm$0.000&0.733$\pm$0.000\\
MVSC%~\cite{LiNHH15}
&0.733$\pm$0.115&0.651$\pm$0.069&0.780$\pm$0.070&0.650$\pm$0.092&0.773$\pm$0.041&0.704$\pm$0.066&0.592$\pm$0.096\\
RDEKM%$^\dagger$%~\cite{XuHNL17}
&0.885$\pm$0.000&0.717$\pm$0.000&0.885$\pm$0.000&0.795$\pm$0.000&0.804$\pm$0.000&0.800$\pm$0.000&0.733$\pm$0.000\\
SMSC%$^\dagger$%~\cite{HuNWL20}
&0.913$\pm$0.000&0.789$\pm$0.000&0.913$\pm$0.000&0.843$\pm$0.000&0.850$\pm$0.000&0.846$\pm$0.000&0.795$\pm$0.000\\
AMGL%$^\dagger$%~\cite{NieLL16}
&0.910$\pm$0.000&0.785$\pm$0.000&0.910$\pm$0.000&0.836$\pm$0.000&0.843$\pm$0.000&0.840$\pm$0.000&0.786$\pm$0.000\\
MLAN%~\cite{ref_MLAN}
&0.744$\pm$0.001&0.659$\pm$0.001&0.744$\pm$0.000&0.643$\pm$0.001&\textbf{0.921}$\pm$\textbf{0.001}&0.757$\pm$0.001&0.656$\pm$0.001\\
RMSC&0.705$\pm$0.000&0.486$\pm$0.000&0.705$\pm$0.000&0.590$\pm$0.000&0.606$\pm$0.000&0.598$\pm$0.000&0.462$\pm$0.001\\
CSMSC&0.643$\pm$0.000&0.645$\pm$0.010&0.832$\pm$0.008&0.776$\pm$0.014&0.612$\pm$0.008&0.684$\pm$0.010&0.562$\pm$0.012\\
SFMC%$^\dagger$%~\cite{SFMC}
&0.917$\pm$0.000&0.801$\pm$0.000&0.917$\pm$0.000&0.846$\pm$0.000&0.855$\pm$0.000&0.852$\pm$0.000&0.802$\pm$0.000\\
\textbf{Ours}%$^\dagger$
&\textbf{0.948}$\pm$\textbf{0.000}&\textbf{0.860}$\pm$\textbf{0.000}&\textbf{0.948}$\pm$\textbf{0.000}&\textbf{0.901}$\pm$\textbf{0.000}&0.904$\pm$0.000&\textbf{0.903}$\pm$\textbf{0.000}&\textbf{0.870}$\pm$\textbf{0.000}\\
%\textbf{Ours}$^\dagger$&\textbf{0.938}$\pm$\textbf{0.000}&\textbf{0.855}$\pm$\textbf{0.000}&\textbf{0.938}$\pm$\textbf{0.000}&\textbf{0.884}$\pm$\textbf{0.000}&0.890$\pm$0.000&\textbf{0.887}$\pm$\textbf{0.000}&\textbf{0.849}$\pm$\textbf{0.000}\\
\Xhline{2pt}
Dataset&\multicolumn{7}{c}{\textbf{Caltech101-20}}\\
\hline
Metric&ACC&NMI&Purity&PER&REC&F-score&ARI\\
\Xhline{1.2pt}
s-CLR ($\mathbf{X}^{(1)}$)%~\cite{NieWJH16}
&0.390$\pm$0.000&0.174$\pm$0.000&0.450$\pm$0.000&0.195$\pm$0.000&0.720$\pm$0.000&0.307$\pm$0.000&0.069$\pm$0.000\\
s-CLR ($\mathbf{X}^{(2)}$)%~\cite{NieWJH16}
&0.387$\pm$0.000&0.238$\pm$0.000&0.468$\pm$0.000&0.177$\pm$0.000&0.501$\pm$0.000&0.261$\pm$0.000&0.027$\pm$0.000\\
s-CLR ($\mathbf{X}^{(3)}$)%~\cite{NieWJH16}
&0.323$\pm$0.000&0.206$\pm$0.000&0.422$\pm$0.000&0.169$\pm$0.000&0.515$\pm$0.000&0.254$\pm$0.000&0.013$\pm$0.000\\
s-CLR ($\mathbf{X}^{(4)}$)%~\cite{NieWJH16}
&0.442$\pm$0.000&0.269$\pm$0.000&0.492$\pm$0.000&0.198$\pm$0.000&0.745$\pm$0.000&0.313$\pm$0.000&0.076$\pm$0.000\\
s-CLR ($\mathbf{X}^{(5)}$)%~\cite{NieWJH16}
&0.414$\pm$0.000&0.284$\pm$0.000&0.479$\pm$0.000&0.205$\pm$0.000&0.763$\pm$0.000&0.324$\pm$0.000&0.091$\pm$0.000\\
s-CLR ($\mathbf{X}^{(6)}$)%~\cite{NieWJH16}
&0.358$\pm$0.000&0.244$\pm$0.000&0.452$\pm$0.000&0.172$\pm$0.000&0.611$\pm$0.000&0.268$\pm$0.000&0.019$\pm$0.000\\
s-CLR-Concat%~\cite{NieWJH16}
&0.596$\pm$0.000&0.429$\pm$0.000&0.653$\pm$0.000&0.313$\pm$0.000&0.817$\pm$0.000&0.453$\pm$0.000&0.285$\pm$0.000\\
Co-reg%~\cite{KumarRD11r}
&0.412$\pm$0.006&0.587$\pm$0.003&0.754$\pm$0.004&0.712$\pm$0.008&0.243$\pm$0.004&0.363$\pm$0.006&0.295$\pm$0.025\\
SwMC&0.599$\pm$0.000&0.493$\pm$0.000&0.700$\pm$0.000&0.509$\pm$0.000&0.625$\pm$0.000&0.431$\pm$0.000&0.265$\pm$0.000\\
MVGL%$^\dagger$%~\cite{ZhanZGW18}
&0.600$\pm$0.000&0.474$\pm$0.000&0.696$\pm$0.000&0.325$\pm$0.000&0.653$\pm$0.000&0.440$\pm$0.000&0.282$\pm$0.000\\
MVSC%~\cite{LiNHH15}
&0.595$\pm$0.000&0.613$\pm$0.000&0.717$\pm$0.000&0.542$\pm$0.000&0.546$\pm$0.000&0.541$\pm$0.000&0.451$\pm$0.000\\
RDEKM%$^\dagger$%~\cite{XuHNL17}
&0.424$\pm$0.000&0.572$\pm$0.000&0.768$\pm$0.000&\textbf{0.747}$\pm$\textbf{0.000}&0.299$\pm$0.000&0.427$\pm$0.000&0.368$\pm$0.000\\
SMSC%$^\dagger$%~\cite{HuNWL20}
&0.582$\pm$0.000&0.590$\pm$0.000&0.748$\pm$0.000&0.701$\pm$0.000&0.473$\pm$0.000&0.565$\pm$0.000&0.485$\pm$0.000\\
AMGL%~\cite{NieLL16}
&0.557$\pm$0.047&0.552$\pm$0.061&0.677$\pm$0.058&0.480$\pm$0.093&0.539$\pm$0.015&0.503$\pm$0.054&0.397$\pm$0.080\\
MLAN%~\cite{ref_MLAN}
&0.526$\pm$0.007&0.474$\pm$0.003&0.666$\pm$0.000&0.279$\pm$0.003&0.559$\pm$0.020&0.372$\pm$0.007&0.198$\pm$0.007\\
RMSC&0.385$\pm$0.024&0.512$\pm$0.012&0.742$\pm$0.013&0.692$\pm$0.038&0.231$\pm$0.019&0.346$\pm$0.026&0.288$\pm$0.027\\
CSMSC&0.474$\pm$0.037&0.648$\pm$0.011&0.563$\pm$0.031&0.290$\pm$0.034&0.730$\pm$0.037&0.415$\pm$0.039&0.356$\pm$0.040\\
SFMC%^$^\dagger$%~\cite{SFMC}
&0.642$\pm$0.000&0.595$\pm$0.000&0.748$\pm$0.000&0.586$\pm$0.000&0.677$\pm$0.000&0.628$\pm$0.000&0.461$\pm$0.000\\
\textbf{Ours}%$^\dagger$
&\textbf{0.717}$\pm$\textbf{0.000}&\textbf{0.703}$\pm$\textbf{0.000}&\textbf{0.763}$\pm$\textbf{0.000}&0.695$\pm$0.000&\textbf{0.858}$\pm$\textbf{0.000}&\textbf{0.768}$\pm$\textbf{0.000}&\textbf{0.717}$\pm$\textbf{0.000}\\
%\textbf{Ours}$^\dagger$&\textbf{0.776}$\pm$\textbf{0.000}&\textbf{0.758}$\pm$\textbf{0.000}&\textbf{0.862}$\pm$\textbf{0.000}&0.574$\pm$0.000&\textbf{0.696}$\pm$\textbf{0.000}&\textbf{0.629}$\pm$\textbf{0.000}&\textbf{0.549}$\pm$\textbf{0.000}\\
\Xhline{2pt}
\end{tabular}}
\end{center}
\end{table*}

\textbf{Datasets}: To verify the effectiveness of our proposed method, we make all the experiments on the following four datasets.
\begin{enumerate}
\item \textbf{{MSRC-v5}}~\cite{2005LOCUS} is an object dataset which has seven categories of tree, building, airplane, cow, face, car and bicycle and each of them contains 30 images. There are 5 views as reported in the paper~\cite{HuNWL20} which respectively are 24-dimension (D) CM feature, 576-D HOG feature, 512-D GIST feature, 256-D LBP feature, 254-D CENT feature.
\item \textbf{{Handwritten4}}~\cite{2017UCI} has 10 digits and 2000 images generated in the UCI machine learning repository. We choose 4 views in our experiment, which are 76-D FOU feature, 216-D FAC feature, 47-D ZER feature and 6-D MOR feature.
\item \textbf{{Mnist4}}~\cite{2012The} is a subset of MNIST with handwritten image recognition features. It contains 4000 samples of four categories, labelled as 0 to 3. In the experiment, we choose 3 views which respectively are 30-D ISO feature, 9-D LDA feature and 30-D NPE feature.
\item \textbf{{Caltech101-20}}~\cite{2007Learning} includes 20 categories with 2, 386 images. It is a subsets of Caltech101 datasets. We choose 48-D GABOR feature, 40-D WM feature, 254-D CENT feature, 1, 984-D HOG feature, 512-D GIST feature and 928-D LBP feature as 6 views in our experiment.
\end{enumerate}

\textbf{Baselines}: We choose 12 comparisons, including single-view and multi-view methods, \ie, Co-reg~\cite{2011Co}, SwMC~\cite{NieLL17}, MVGL~\cite{2017Graph}, MVSC~\cite{LiNHH15}, RDEKM~\cite{XuHNL17}, SMSC~\cite{HuNWL20}, AMGL~\cite{NieLL16}, MLAN~\cite{ref_MLAN}, RMSC~\cite{xia2014robust}, SFMC~\cite{SFMC}, CSMSC~\cite{luo2018consistent} and single-view constrained Laplacian rank (s-CLR)~\cite{2016The}.

\textbf{Metrics}: We use the 7 popular metrics to evaluate the clustering performance, including Accuracy (ACC), Normalized Mutual Information (NMI), Purity, Percision (PRE), Recall (REC), F-score and Adjusted Rand Index (ARI). Detailed introduction about the metrics can be found in ~\cite{2019Hyper}.
For each dataset, we repeat experiments 20 times independently and obtain the average value as the final result.

\subsection{Comparisons with State-of-the-art Methods}
Table~\ref{result1} and Table~\ref{result2} present the clustering results of our proposed method and baselines on four datasets. For CLR with single-view setting, s-CLR($\mathbf{X}^{(v)}$) denotes the results of CLR by employing features of $i$-th view, and s-CLR-Concat denotes the results of s-CLR on the concatenated view-features. By observing Table~\ref{result1} and Table~\ref{result2}, we can easily find the following results:
\begin{enumerate}
\item The clustering performance of single-view method, \ie, s-CLR are generally inferior to the multi-view clustering methods. In addition, the effect of s-CLR has obvious diversities in different views. The reason might be that each view contains some content of the objects that cannot be included in other views. Therefore, it is reasonable to learn a similarity matrix for each view.
\item Among the multi-view clustering methods, most of them are better than Co-reg. This is mainly because that the clustering results of Co-reg may largely depend on manually defined factors. However, in real-world situations, it is very difficult to find a suitable graph for complex data artificially.
\item Our proposed method is obviously superior to SFMC. For instance, on the MSRC-V5 dataset, our proposed method gains significant improvement around 12$\%$, 17$\%$, 12$\%$, 29$\%$, 9\%, 15\%, and 18\% in terms of ACC, NMI, Purity, PER, REC, F-score, and ARI, respectively. This is probably because that our proposed method well exploit the complementary information and spatial structure embedded in multi-view data via tensor Schatten $p$-norm, which helps to well characterize the cluster structure, while SFMC does not.
\item Particularly, despite our proposed method is an anchor-based method, the clustering performance of our proposed method is also superior to MLAN. This is because that our method takes into account significant difference of different views. Specifically, our method learns similarity graph for each view individually, and leverages tensor Schatten $p$-norm regularizer to minimize differences between similarity graphs, which also help to exploit the complementary information and spatial structure embedded in similarity graphs, while MLAN does not. These results also indicate that anchor graph can totally well encode cluster structure of data and afford efficient clustering.
\end{enumerate}

\begin{figure*}[!t]
\centering
\subfigure[Handwritten4]{
\begin{minipage}[t]{0.23\linewidth}
\centering
\includegraphics[width=4.3cm]{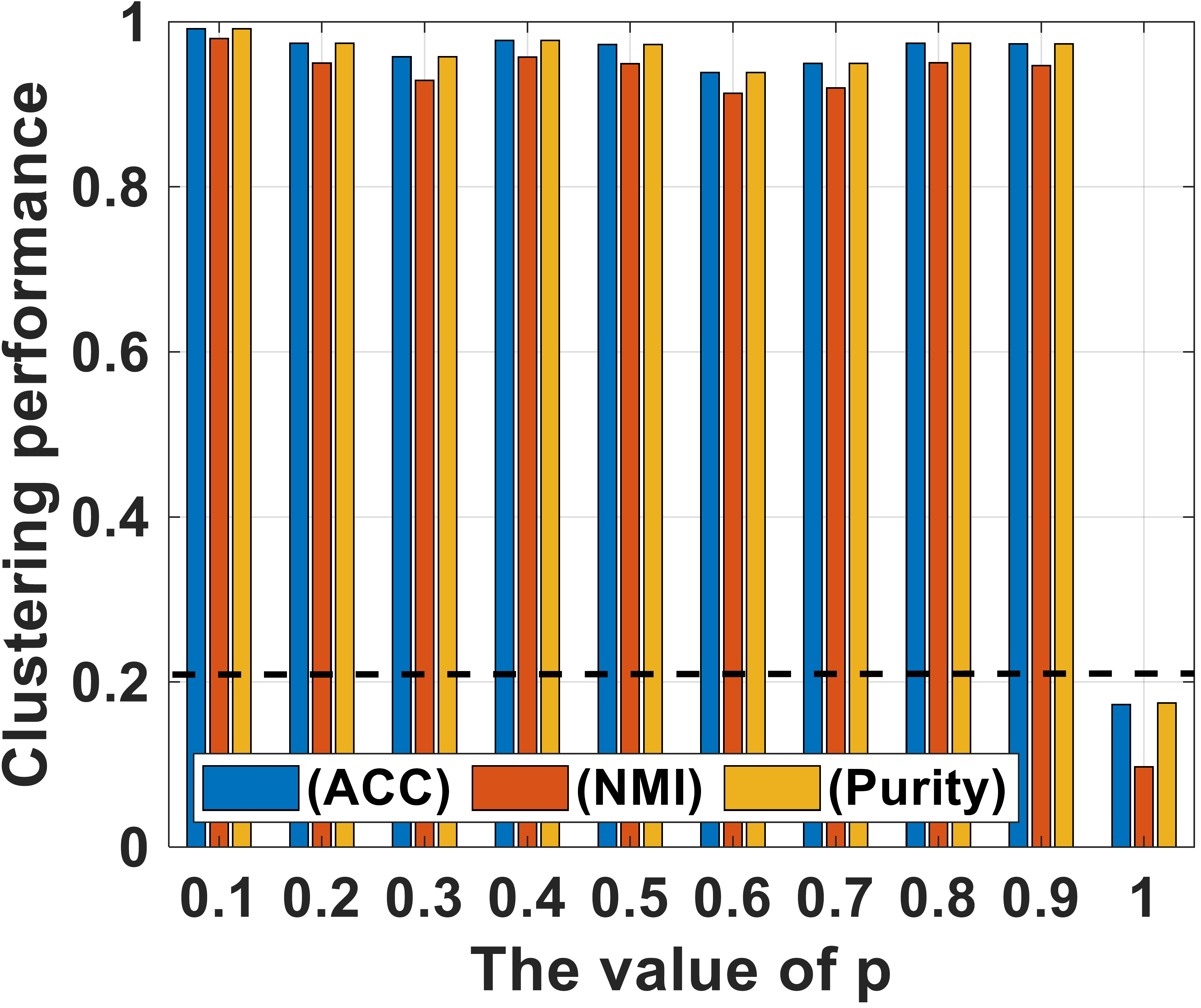}
\end{minipage}
}
\subfigure[Mnist4]{
\begin{minipage}[t]{0.23\linewidth}
\centering
\includegraphics[width=4.3cm]{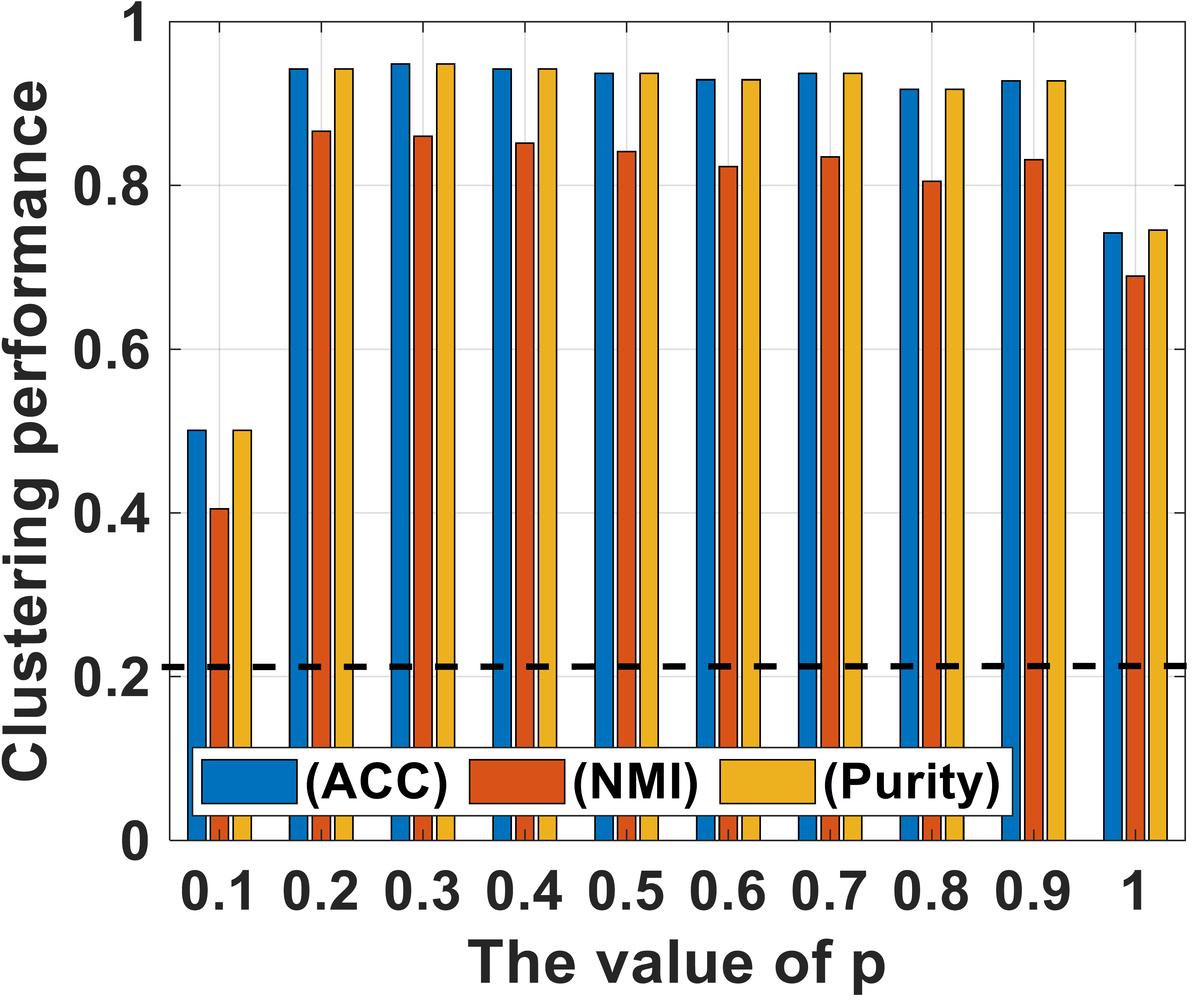}
\end{minipage}
}
\subfigure[MSRC-v5]{
\begin{minipage}[t]{0.23\linewidth}
\centering
\includegraphics[width=4.3cm]{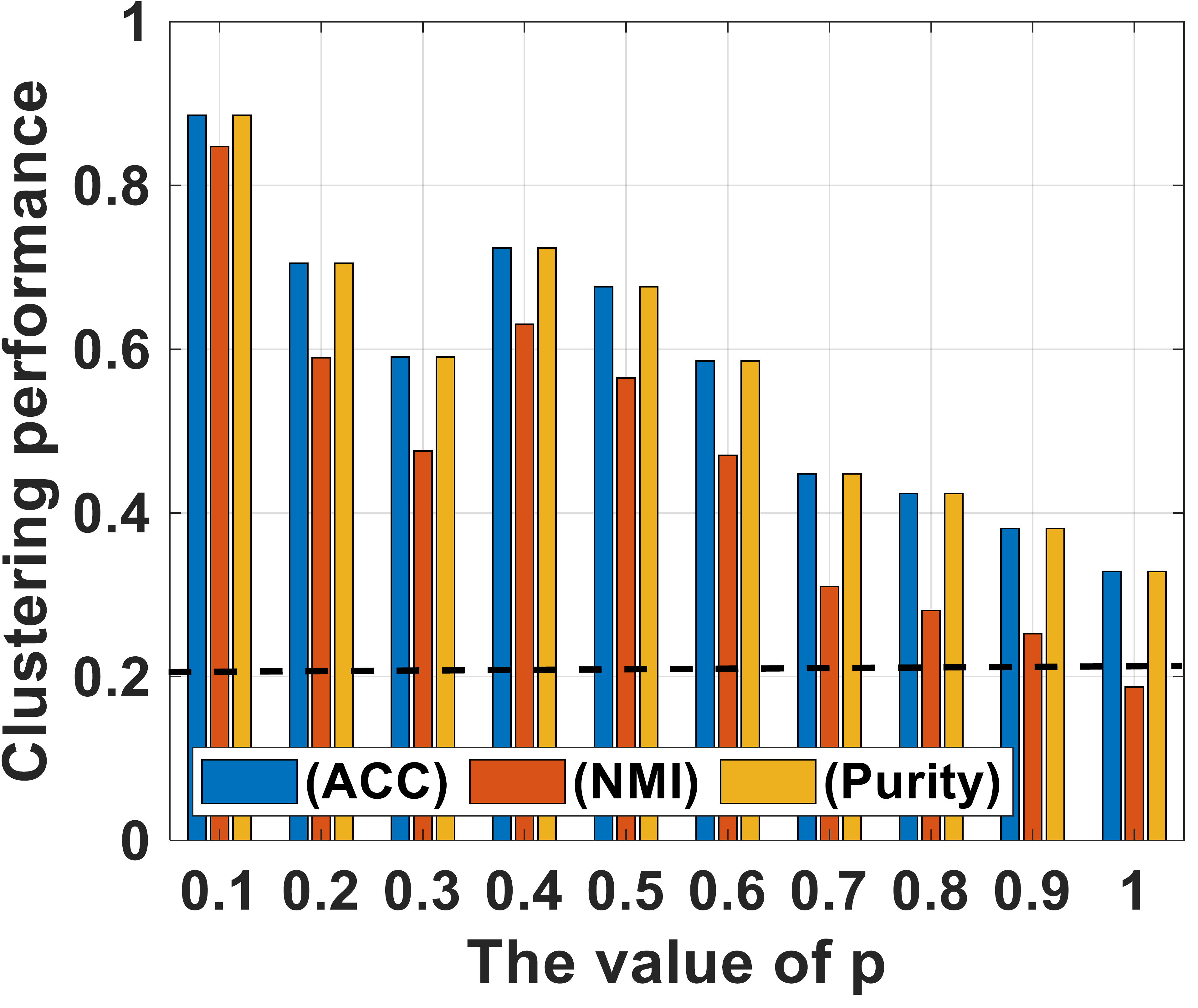}
\end{minipage}
}
\subfigure[Caltech101-20]{
\begin{minipage}[t]{0.23\linewidth}
\centering
\includegraphics[width=4.3cm]{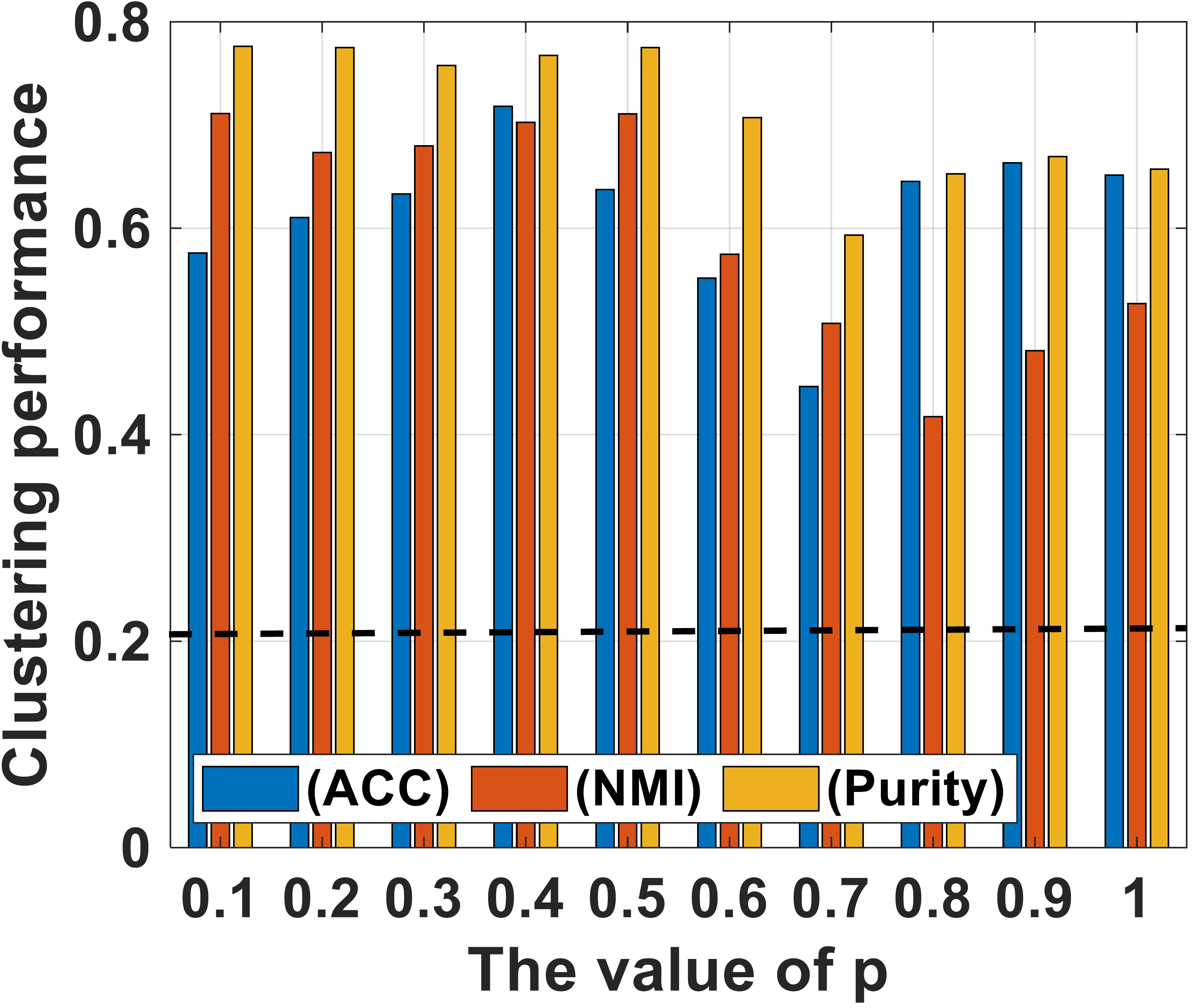}
\end{minipage}
}
\centering
\caption{The clustering performances of our method with the varying value of $p$ on Handwritten4, Mnist4, MSRC-v5 and Caltech101-20 datasets.}
\label{p-value}
\end{figure*}

\begin{figure*}[!t]
\centering
\subfigure[MSRC-v5]{
\begin{minipage}[t]{0.23\linewidth}
\centering
\includegraphics[width=4.3cm]{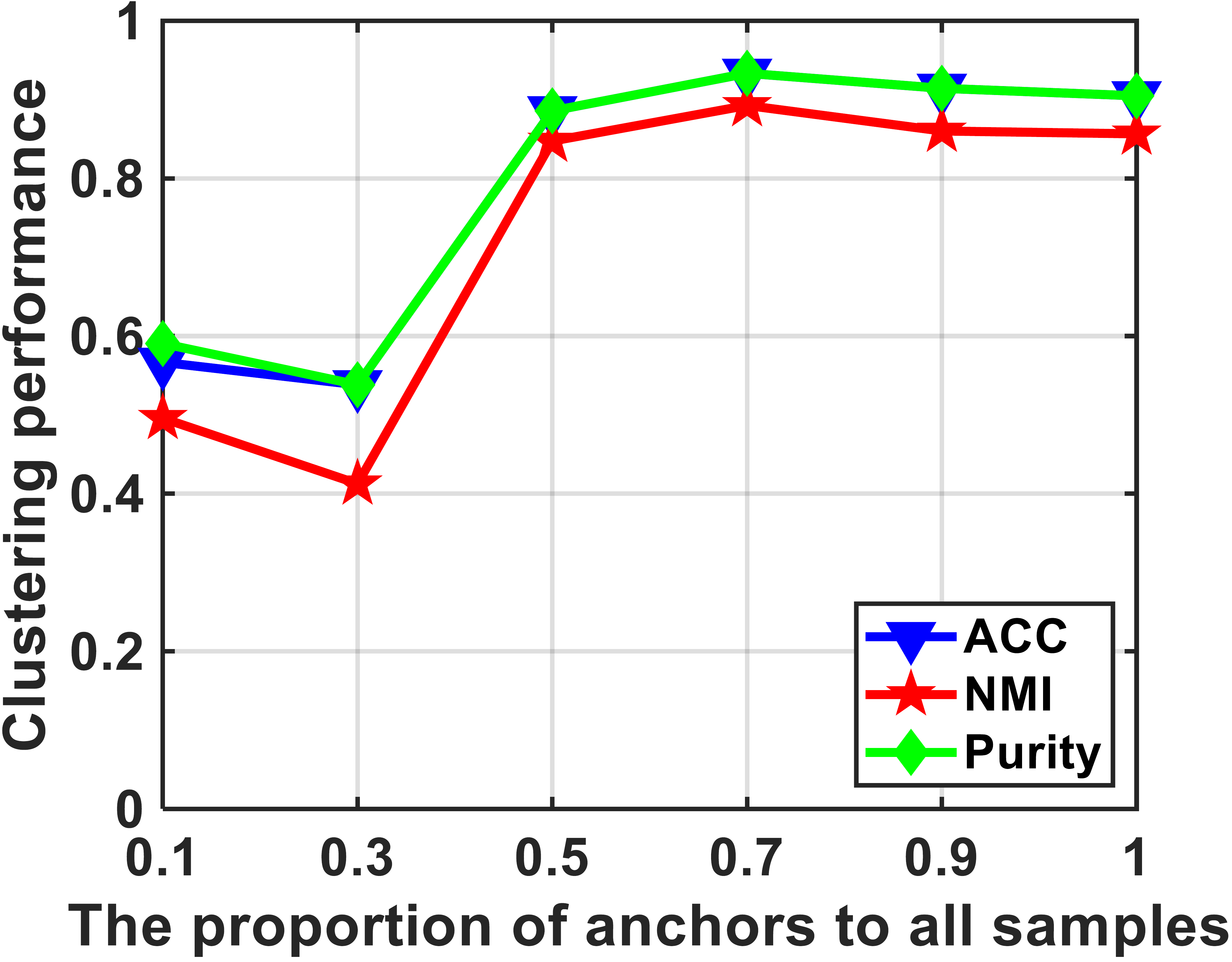}
\end{minipage}
}
\centering
\subfigure[Handwritten4]{
\begin{minipage}[t]{0.23\linewidth}
\centering
\includegraphics[width=4.3cm]{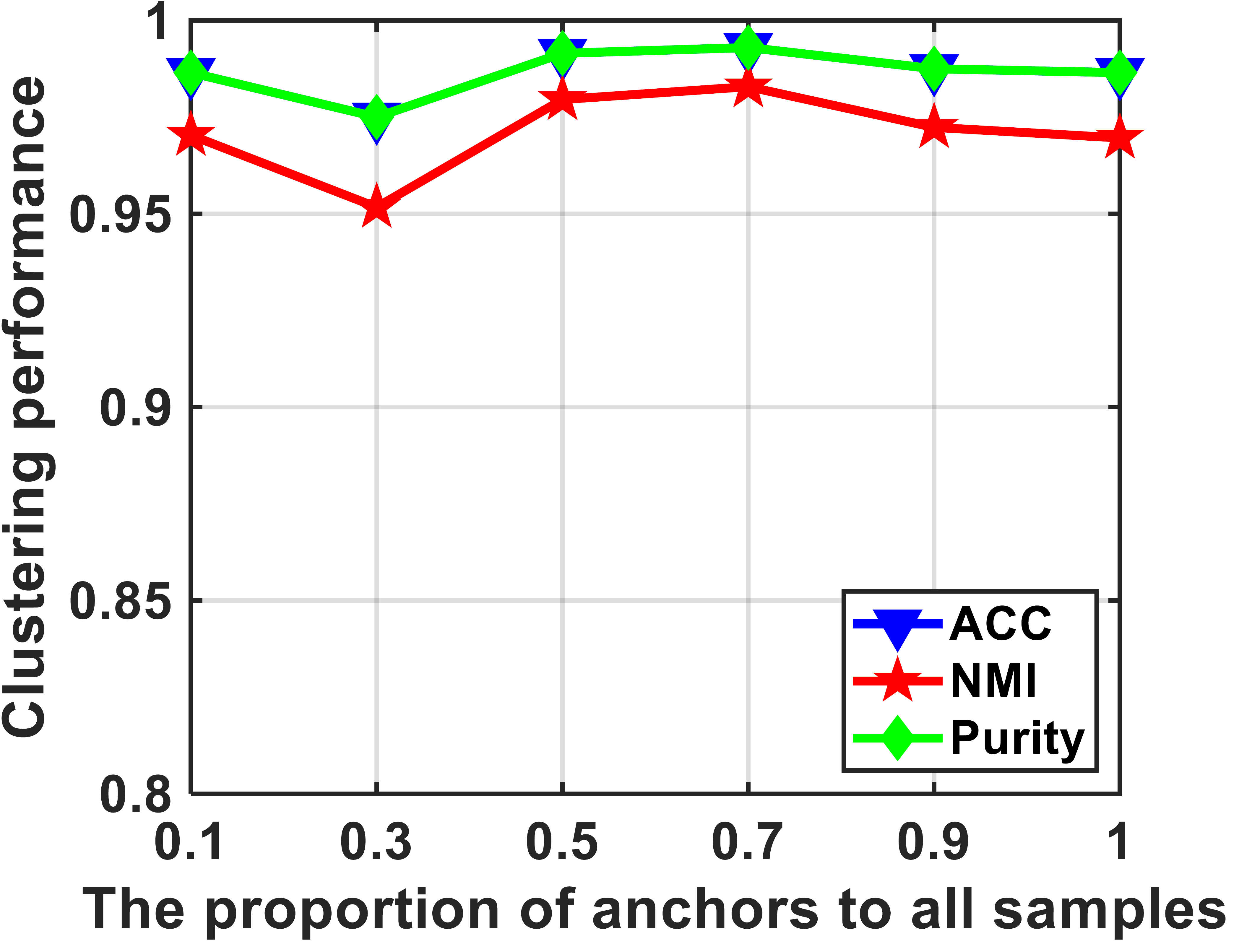}
\end{minipage}
}
\subfigure[Caltech101-20]{
\begin{minipage}[t]{0.23\linewidth}
\centering
\includegraphics[width=4.3cm]{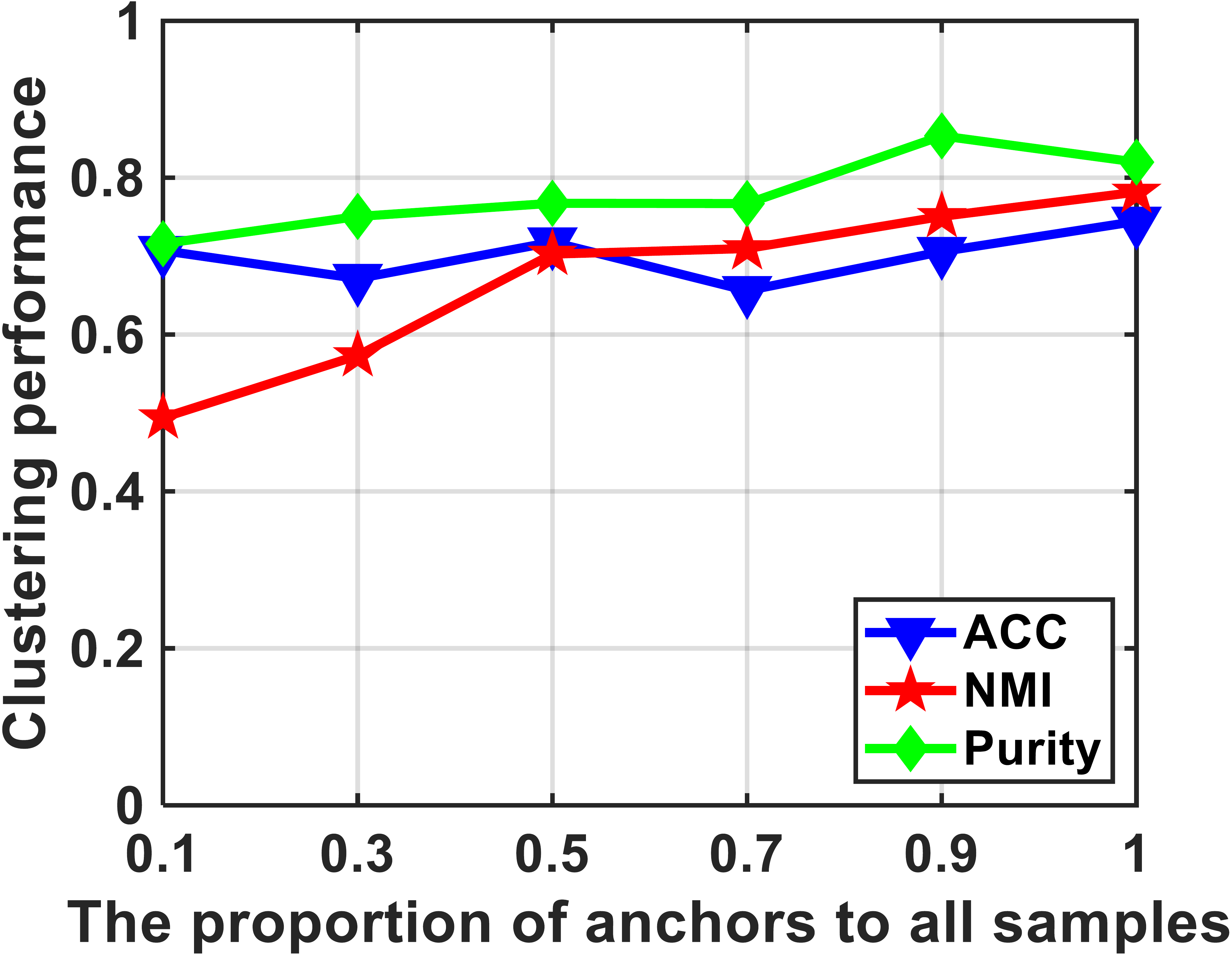}
\end{minipage}
}
\subfigure[Mnist4]{
\begin{minipage}[t]{0.23\linewidth}
\centering
\includegraphics[width=4.3cm]{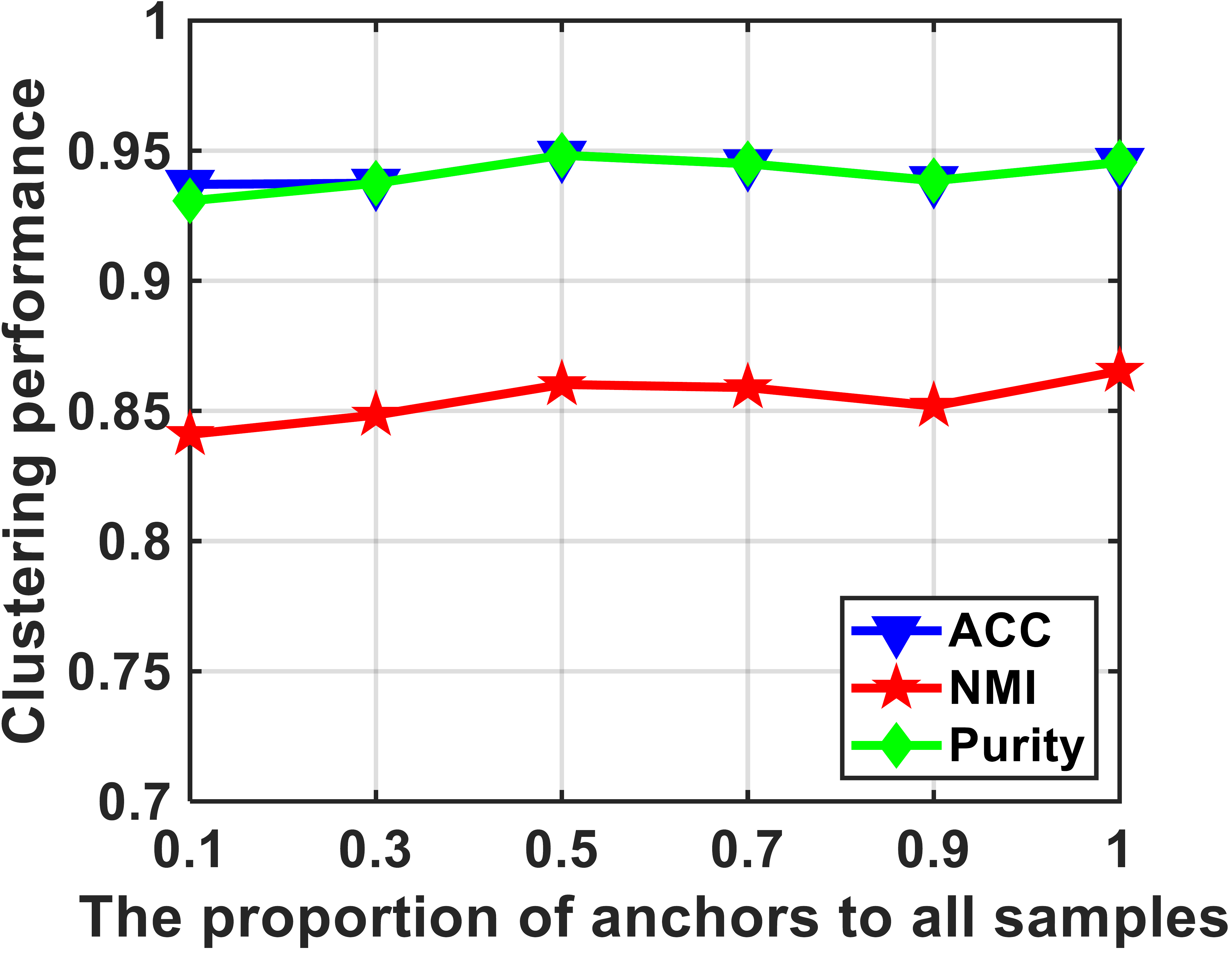}
\end{minipage}
}
\centering
\caption{The performances of our method with varying the number of anchor points on MSRC-v5 , Handwritten4, Caltech101-20 and Mnist4 datasets.}
\label{a-value}
\end{figure*}

\begin{figure}[!t]
\begin{center}
\includegraphics[width=1.0\linewidth]{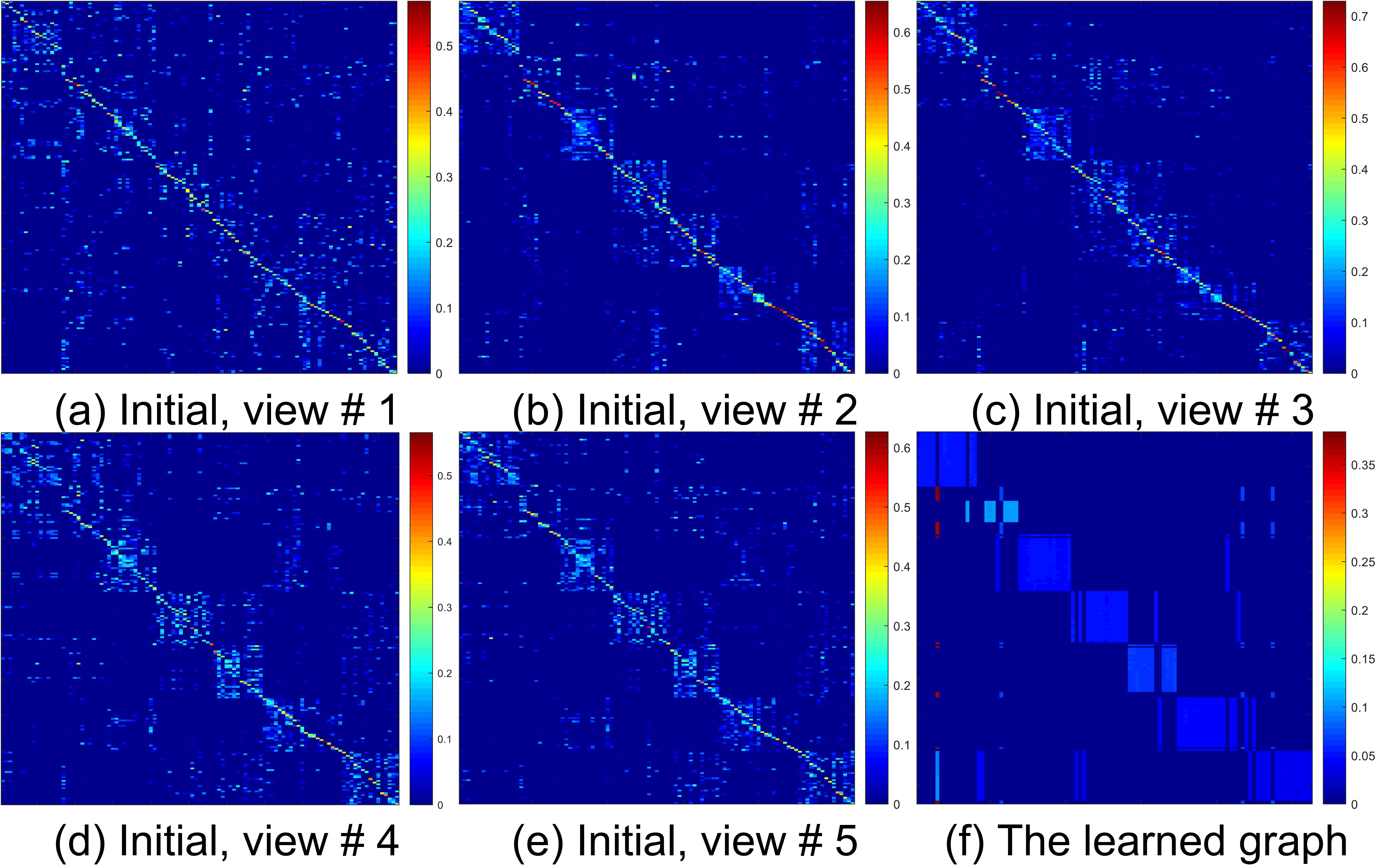}
\end{center}
   \caption{The graphs visualizations on MSRC-v5 dataset.}
\label{visual}
\end{figure}
\subsection{Further Evaluation}
In this subsection, we further evaluate the effect of the parameter $p$, the number of anchors on the clustering performance. We also visualize the learned  graph.

\textbf{Effect of parameter $p$.}
We evaluate the effect of $p$ by comparing the clustering performance on four datasets with different $p$ value, as shown in Figure~\ref{p-value}. In our experiments, we adjust the parameter $p$ from 0.1 to 1 with an interval of 0.1. From the comparison results in terms of ACC, NMI and Purity, the value of $p$ truly influence the clustering performance. To be specific, on Caltech101-20 and Mnist4 datasets, our method achieves the best performance when $p$=0.4 and $p$=0.3. On MSRC-v5 and Handwritten4 datasets, our method achieves the best performance when $p$=0.1. This is probably because that $p$ exploits the significant difference between singular values. Another reason may be that tensor Schatten $p$-norm makes the rank of the learned view-consensus graph well approximate the target rank.

\textbf{Effect of the number of anchors.} We verify the influence of anchors number on clustering performance. To this end, in the experiment, we adjust the proportion of anchor points from 0.1 to 1 with an interval of 0.2. From the Figure~\ref{a-value}, we can see that, our proposed method achieves the best performance when the proportion of anchor points is 0.5, 0.5, 0.7, and 1.0 on Handwritten4, Mnist4, MSRC-v5, and Caltech 101-20 datasets, respectively. Moreover, the curves in Figure~\ref{a-value} are not monotonously increasing w.r.t. anchor proportion, which indicates that it is not necessary to use many anchors in clustering. Therefore, we set the anchor ratio to 0.5 in the experiment in Tables~\ref{result1} and~\ref{result2}.

\textbf{Graph Visualization.} We visualize the initial graphs and the learned graph of our method on MSRC-v5 dataset in Figure~\ref{visual}, where (a) - (e) are initial graphs corresponding to five views, (f) is the shared graph. It can be seen that the connected components in the initial graphs of all five views are not clear. By employing our proposed method, we can observe that the learned graph has exact 7-connected components. These visualizations once again demonstrate that our proposed method well characterizes the cluster structure.

\section{Conclusion}
In this paper, we propose a multiple graph learning model for scalable multi-view clustering. Our method learns similarity graph for each view individually, and utilize tensor Schatten $p$-norm regularizer to minimize differences between similarity graphs, which help to exploit the complementary information and spatial structure embedded in similarity graphs of different views. Meanwhile, we construct a hidden and tractable large graph by anchor graph for each view to replace the full-size graph, thus our method is time-economical. By introducing the connectivity constraint, our method learns a graph with $K$-connected components, and can directly obtain the clustering without post-processing. Extensive experimental results indicate that our method is superior to the state-of-the-art methods.

\ifCLASSOPTIONcaptionsoff
  \newpage
\fi

{\small
\bibliographystyle{ieeetr}
\bibliography{egbib}
}

\end{document}